\theoremstyle{plain}
\newtheorem{theorem}{Theorem}[section]
\newtheorem{proposition}[theorem]{Proposition}
\newtheorem{lemma}[theorem]{Lemma}
\newtheorem{corollary}[theorem]{Corollary}
\theoremstyle{definition}
\newtheorem{definition}[theorem]{Definition}
\theoremstyle{remark}
\DeclareSIUnit{\cells}{cells}
\newcommand{\lolbo}{\texttt{LOL-BO}}
\newcommand{\robot}{\texttt{ROBOT}}
\newcommand{\turbo}{\texttt{TuRBO}}
\newcommand{\morbo}{\texttt{MORBO}}
\newcommand{\cluso}{\texttt{CluSO}}
\newcommand{\ourmethod}{\texttt{MOCOBO}}
\newcommand{\ourmethodfullname}{Multi-Objective Coverage Bayesian Optimization (\texttt{MOCOBO})}
\newcommand{\divf}{\delta} 
\DeclareMathOperator*{\argmax}{arg\,max}
\newcommand{\bx}{\mathbf{x}}
\newcommand{\bp}{\mathbf{p}}
\newcommand{\bb}{\mathbf{b}}
\newcommand{\by}{\mathbf{y}}
\newcommand{\inputdom}{\mathcal{X}}
\newcommand{\tr}[1]{\mathcal{T}_{#1}}
\title{Multi-Objective Coverage Bayesian Optimization (\texttt{MOCOBO})}
\author{%
  Natalie Maus\thanks{Correspondence to: \href{mailto:nmaus@seas.upenn.edu}{nmaus@seas.upenn.edu}}\quad\;
  Kyurae Kim \quad\;
  Yimeng Zeng \quad\;
  Haydn Thomas Jones \quad\;
  Fangping Wan\\
  \textbf{Marcelo Der Torossian Torres}\quad\;
  \textbf{Cesar de la Fuente-Nunez}\quad\;
  \textbf{Jacob R. Gardner}\\
  \\
  University of Pennsylvania\\
  Philadelphia, PA, USA\\
}
\begin{document}

\maketitle

\begin{abstract}
In multi-objective black-box optimization, the goal is typically to find solutions that optimize a set of $T$ black-box objective functions, $f_1, \ldots f_T$, simultaneously. Traditional approaches often seek a single Pareto-optimal set that balances trade-offs among all objectives. 
In contrast, we consider a problem setting that departs from this paradigm: finding a small set of $K < T$ solutions, that collectively ``cover'' the $T$ objectives. A set of solutions is defined as ``covering'' if, for each objective $f_1, \ldots f_T$, there is at least one good solution. 
A motivating example for this problem setting occurs in drug design. 
For example, we may have $T$ pathogens and aim to identify a set of $K < T$ antibiotics such that at least one antibiotic can be used to treat each pathogen. 
This problem, known as coverage optimization, has yet to be tackled with the Bayesian optimization (BO) framework.
To fill this void, we develop \ourmethodfullname{}, a BO algorithm for solving coverage optimization. 
Our approach is based on a new acquisition function reminiscent of expected improvement in the vanilla BO setup.
We demonstrate the performance of our method on high-dimensional black-box optimization tasks, including applications in peptide and molecular design.
Results show that the coverage of the $K < T$ solutions found by \ourmethod{} matches or nearly matches the coverage of $T$ solutions obtained by optimizing each objective individually.
Furthermore, in \textit{in vitro} experiments, the peptides found by \ourmethod{} exhibited high potency against drug-resistant pathogens, further demonstrating the potential of \ourmethod{} for drug discovery.
All of our code is publicly available at the following link: \url{https://github.com/nataliemaus/mocobo}.
\end{abstract}




\section{Introduction}
\label{intro}
\begin{wrapfigure}{r}{0.5\textwidth}
\vskip -0.54in
  \centering
  \includegraphics[width=0.48\textwidth]{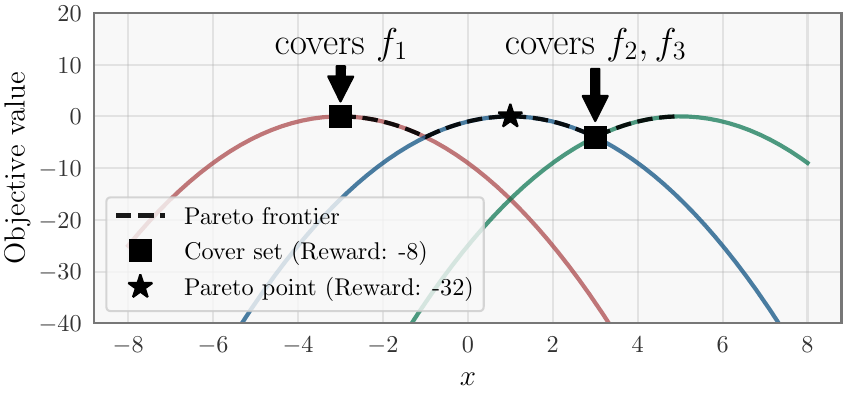}
\vskip -1ex
  \caption{Traditional multi-objective optimization for $T$ objectives might select any point along the Pareto frontier, but in some situations like this any Pareto optimal point performs poorly on at least one objective. In situations where multiple $K < T$ solutions are allowed ($\blacksquare$), we can sometimes optimize all objectives well. Note that this is a simplified schematic meant to illustrate intuition. 
  }
  \label{fig:example-1d}
\vspace{-8ex}
\end{wrapfigure}
Bayesian optimization (BO; \citep{jones1998efficient,shahriari2015taking,garnett2023bayesian}) is a general framework for sample-efficient optimization of black-box functions.
By using a probabilistic surrogate model, such as a Gaussian process (GP; \citep{rasmussen2005gaussian}), Bayesian optimization balances exploration and exploitation to identify high-performing solutions with a limited number of function evaluations.
BO has been successfully applied in a wide range of domains, including hyperparameter tuning~\cite{snoek2012practical, turner2021bayesian}, A/B testing~\cite{letham2019noisyei}, chemical engineering~\cite{hernandez2017parallel}, drug discovery~\cite{negoescu2011knowledge}, and more.

For application domains such as drug discovery, however, the setup assumed by vanilla BO, where there is only one clear objective, is often insufficient. 
Consider the example of designing antibodies for treating a broad spectrum of pathogens.
Here, the activity against each individual pathogen serves as an objective, each equally important.
Ideally, one would seek to optimize all objectives simultaneously by obtaining a single broad-spectrum antibiotic.
When this is not possible, however, we should seek antibiotics that are potent against a large enough group of pathogens.
By identifying a set of such solutions---a \textit{covering set}---we would obtain a set of drugs that are \textit{together} lethal against the set of pathogens.

At first, this setup may be reminiscent of multi-objective optimization, which has been tackled under the black-box setup via multi-objective BO (MOBO; \citep{mobo1, mobo2, mobo3, dgemo, morbo, lambo, mobo4-usemo}).
These methods search for Pareto-optimal solutions that balance the trade-offs between the individual objectives, enabling practitioners to \textit{post hoc} select a \textit{single} solution lying on the Pareto front based on their preferences. 
In cases where some objectives are completely incompatible with each other, however, selecting a single solution on the Pareto front cannot simultaneously satisfy performance requirements across those objectives.
This is illustrated in \cref{fig:example-1d}, where no individual solution can provide acceptable performance across all objectives due to the extreme trade-offs between objectives.

Instead, our problem is distinctly described as the \textit{coverage optimization} problem.
More formally, consider a set of $T$ distinct objectives.
Our goal is to obtain a set-valued solution of size $K < T$, where the set ``covers'' the $T$ objectives:
For each objective, this $K$-element covering set should contain at least one solution that performs similarly to the maximizer of that objective.
In the context of drug design, this translates to identifying $K$ drugs such that each of the $T$ pathogens is effectively addressed by at least one drug in the set. 
This setting is particularly relevant to applications where the size of $K$ is associated with downstream cost.
For instance, in drug discovery, the number of compounds that can be synthesized is fundamentally constrained by the capacity of the facility.

Previously, the coverage optimization problem has been tackled in~\citep{icml25-r1-1,icml25-r1-2,icml25-r1-3} through gradient-based algorithms, while 
for the black-box setup, \citet{cluso} proposed an evolutionary algorithm.
Over the years, variants of BO have demonstrated superior performance over evolutionary algorithms in high-dimensional~\citep{turbo,gonzalezduque2024survey,lambo} and multi-objective problems~\citep{morbo}, which has been instrumental for its success in drug discovery~\citep{gonzalezduque2024survey,apexgo,wu20205generative,khan2023realworld}.
Therefore, it is natural to ask how we could harness the strengths and flexibility of BO to attack the coverage optimization problem.

In this work, we develop \ourmethodfullname{}---a BO-based method for solving the coverage optimization problem with high-dimensional black-box objectives.
\ourmethod{} primarily consists of a coverage-optimization analog of the celebrated expected improvement (EI; \citep{mockus1982bayesian}) acquisition function, where the covering set is greedily constructed via submodular optimization~\citep{greedysubmodular}.
We evaluate the resulting algorithm on realistic optimization tasks: drug discovery over molecules and peptides, and tuning of image processing pipelines.
Baselines include the method of \citep{cluso} and MOBO.

Our contributions are summarized as follows:
\begin{enumerate}[itemsep=0ex,leftmargin=3ex,label=$\bullet$]
    \item We propose \ourmethod{}, a coverage optimization algorithm for black-box objectives that can deal with structured, high-dimensional search spaces, and scale to large numbers of function evaluations. 
    
    \item We experimentally validate \ourmethod{} on challenging, high-dimensional optimization tasks including structured drug discovery over molecules and peptides, demonstrating its ability to consistently outperform state-of-the-art BO methods in identifying high-performing covering sets of solutions. 
    
    \item We demonstrate empirically that the small sets of $K<T$ solutions found by \ourmethod{} consistently nearly match the performance of a larger set of $T$ solutions individually optimized for each task. 
    
    \item We demonstrate the potential of \ourmethod{} for drug discovery using an \textit{in vitro} experiment showing it produces potent antimicrobial peptides that cover 9 out of 11 drug resistant or otherwise challenging to kill pathogens, with moderate activity on 1 out of 11.
\end{enumerate}
\vspace{-1ex}
\section{Background}\label{sec: background}
\label{background}
\vspace{-1ex}
\paragraph{Bayesian Optimization.} 
To solve the black-box optimization problem, BO operates by iteratively selecting a query according to a certain search policy~\citep{garnett2023bayesian} and observing the objective value on the query.
The observations that have been obtained up to iteration $n \geq 0$ are collected into a dataset $D_n$ to form a surrogate model, typically GP, which supplies information to the policy.
Most policies are chosen to maximize a function known as \textit{acquisition function}~\citep{garnett2023bayesian} such that, at each step $n \geq 0$, the query $\mathbf{x}_{n+1}$ is found as
{%
\setlength{\belowdisplayskip}{1ex} \setlength{\belowdisplayshortskip}{1ex}
\setlength{\abovedisplayskip}{1ex} \setlength{\abovedisplayshortskip}{1ex}
\begin{align}
    \mathbf{x}_{n+1} = \argmax_{\mathbf{x} \in \mathcal{X}} \alpha\left(\mathbf{x}; D_n\right) \; , 
    \label{eq:acquisition}
\end{align}
}%
where EI~\citep{mockus1982bayesian} is a popular example of an acquisition function.
In our case, the optimization problem is over covering sets.
Naturally, an acquisition function for this setting is needed.

\vspace{-1ex}
\paragraph{Trust Region Bayesian Optimization (TuRBO).} 
For high-dimensional problems, the basic scheme of solving \cref{eq:acquisition} is no longer effective.
Local BO methods~\citep{turbo} address this issue by restricting the search space to a trust region that is adjusted adaptively.
This simple modification has been shown to be effective and has been employed in various setups.
In high-dimensional MOBO problems specifically, a variant of TuRBO~\citep{turbo} referred to as \turbo{}-$M$ has been shown to be effective~\citep{morbo,robot}.
\turbo{}-$M$ works by running $M$ parallel local optimization runs, each maintaining its own dataset $D_{i}$ and surrogate model. 
Each local optimizer proposes candidates within a hyper-rectangular trust region $\tr{i}$, which is taken to be a rectangular subset of the space $\mathcal{X}$ centered on the incumbent $\bx^{+}_i$. 
The side of each $\tr{i}$ is chosen to have a length $\ell_i \in [\ell_{\mathrm{min}}, \ell_{\mathrm{max}}]$. 
If an optimizer improves the incumbent for $\rho_{\mathrm{succ}}$ consecutive iterations, $\ell_i$ expands to $\min(2\ell_i, \ell_{\mathrm{max}})$. If not improved in $\rho_{\mathrm{fail}}$ iterations, $\ell_i$ is halved. 
Optimizers are restarted if $\ell_i$ drops below $\ell_{\mathrm{min}}$. 
We will later incorporate this scheme for solving the coverage optimization problem.

\section{Multi-Objective Coverage Bayesian Optimization}
\label{sec: methods}

\subsection{Coverage Optimization}

Consider $T > 1$ objectives $f_1, \ldots, f_T$, each defined as $f_t: \inputdom \rightarrow \mathbb{R}$ for each $t = 1, \ldots, T$, all sharing the same input domain $\inputdom$.
For some user-defined parameter $K \in \{1, \ldots, T - 1\} $, we consider the task of finding a set of $K$ solutions $S^{*} = \left\{ \bx^{*}_{1}, \ldots, \bx^{*}_{K} \right\}$ that ``covers'' the $T$ objectives ${\{f_t\}}_{t = 1, \ldots, T}$.
Specifically, the set $S^{*} \subseteq \inputdom$ ``covers'' the $T$ objectives if, for each objective $f_t \in \{f_1, \ldots, f_T\}$, there is at least one solution $\bx^{*}_{k} \in S^{*}$ for which $f_t$ is well optimized by $\bx^{*}_{k}$. We evaluate how well a set of $K$ points $\left\{ \bx_{1}, \ldots, \bx_{K} \right\}$ ``covers" the $T$ objectives using the following ``coverage score''
Formally,
\begin{align}
c(\left\{ \bx_{1}, \ldots, \bx_{K} \right\}) = 
{\sum}_{t = 1}^{T} \max_{k = 1}^{K} f_t(\bx_k). \label{eq:coverage_score}
\end{align}
Formally, we seek a set $S^{*} := \left\{ \bx^{*}_{1}, \ldots, \bx^{*}_{K} \right\}$ such that: 
\begin{align}
S^{*} &= \argmax_{ \left\{ \bx_{1}, \ldots, \bx_{K} \right\} \subseteq \inputdom} c(\left\{ \bx_{1}, \ldots, \bx_{K} \right\}). \label{eq:problem_def}
\end{align}
The coverage score in \eqref{eq:coverage_score} only credits a single solution for each objective. 
If, for example, $f_{t}$ is maximized by one of the $\bx_{i}$ in the solution set, improving the value of $f_{t}$ on some other $\bx_{j \neq i}$ becomes irrelevant so long as $f_{t}(\bx_{i}) \geq f_{t}(\bx_{j})$. 
In the setting where $K=1$, the coverage score collapses into a trivial linearization of the objectives, and true multi-objective BO methods should be preferred. In the setting where $K=T$, the coverage score is trivially optimized by maximizing each objective independently.
This problem has previously been formulated in~\citep{icml25-r1-1,icml25-r1-2,icml25-r1-3,cluso}, where we will proceed by developing a new BO-based solution.

\vspace{-1ex}
\subsection{Multi-Objective Coverage Bayesian Optimization (\ourmethod{})}
\label{sec:ours}
In this section, we propose \ourmethod{} - an algorithm which extends Bayesian optimization to the problem setting above. On each step of optimization $s$, we use our current set of all data evaluated so far $D_s = \left\{ (\bx_1, \by_1), \ldots, (\bx_n, \by_n)  \right\}$ to define the best covering set $S^{*}_{D_s} = \big\{ \bx^{*(s)}_{1}, \ldots, \bx^{*(s)}_{K} \in D_s \big\}$ found so far. 
Here $\by_i = (f_1(\bx_i), \ldots, f_T(\bx_i))$ and $n$ is the number of data points evaluated so far at step $s$. 
Following \cref{eq:problem_def}, we define $S^{*}_{D_s}$ as follows: 
\begin{align}
S^{*}_{D_s} 
\quad=\quad \argmax_{ \left\{ \bx_{1}, \ldots, \bx_{K} \right\} \subseteq D_s} c(\left\{ \bx_{1}, \ldots, \bx_{K} \right\}) 
\quad=\quad \argmax_{ \left\{ \bx_{1}, \ldots, \bx_{K} \right\} \subseteq D_s} {\sum}_{t = 1}^{T} \max_{k = 1}^{K} f_t(\bx_k). 
\label{eq:problem_def_each_step}
\end{align}
Here, $c(S^{*}_{D_s})$ denotes the best coverage score found by the optimizer after optimization step $s$.  

\vspace{-1ex}
\subsubsection{Candidate Selection with Expected Coverage Improvement (ECI)}
\label{sec:eci}
Given $S^{*}_{D_s}$, our surrogate model's predictive posterior $p(y \mid \bx, D)$ induces a posterior belief about the improvement in coverage score achievable by choosing to evaluate at $\bx$ next. 
We naturally extend the typical expected improvement (EI; \citep{mockus1982bayesian}) acquisition function to the coverage optimization setting by defining expected coverage improvement (ECI):
\begin{align}
\mathrm{ECI}(\bx) &= \mathbb{E}_{p(y | \bx,D)}[ \max(0,  c(S^{*}_{D_s \cup \left\{(\bx, \by)\right\}}) - c(S^{*}_{D_s}))].
\label{eq:eci}
\end{align}
Here $c(S^{*}_{D_s})$ is the coverage score of the best possible covering set from among all data observed $D_s$ -- as we shall see, constructing this set $S^{*}_{D_s}$ will be our primary challenge. $c(S^{*}_{D_s \cup \left\{(\bx, \by)\right\}})$ is the coverage score of the best possible covering set after adding the observation $(\bx, \by)$. 
Thus, ECI gives the expected improvement in the coverage score after making an observation at point $\bx$.
We aim to select points during acquisition that maximize ECI. 
Note that the name ECI is shared with a method proposed by \citet{r1-cite1} for multi-objective experimental design, but their method targets an entirely different notion of ``coverage.''
(See \cref{sec: related_works} for more details.)

We estimate ECI using a Monte Carlo (MC) approximation. 
To select a single candidate $\hat{\bx}$, we sample $m$ points $P = \left\{ \bp_{1}, \bp_{2}, ..., \bp_{m} \right\}$. 
For each sampled point $\bp_{j}$, we sample a realization $\hat{\by}_{j} = (\hat{f}_1(\bp_{j}), \ldots, \hat{f}_T(\bp_{j}))$ from the GP surrogate model posterior. 
We leverage these samples to compute an MC approximation to the ECI of each $\bp_{j}$: 
\begin{align}
\mathrm{CI}(\bp_{j}) &= \max(0,  c(S^{*}_{D_s \cup \left\{(\bp_{j}, \hat{\by}_{j})\right\}} - c(S^{*}_{D_s}))  ).
\label{eq:coverage_improvement}
\end{align}
Here, $c(S^{*}_{D_s \cup \left\{(\bp_{j}, \hat{\by}_{j})\right\}})$ is the approximation of the coverage score of the new best covering set if we choose to evaluate candidate $\bp_{j}$, assuming the candidate point will have the sampled objective values $\hat{\by}_{j}$.
We select and evaluate the candidate $\bp_{j}$ with the largest expected coverage improvement. 

\subsubsection{Greedy Approximation of Best Observed Covering Set}
\label{sec:greedy}
\begin{wrapfigure}{r}{0.51\textwidth}
  \begin{minipage}{0.49\textwidth}
    \centering
    \rule{\linewidth}{0.4pt}  
    \captionof{algorithm}{Greedy $(1 - \frac{1}{e})$-Approximation for Finding $S^{*}_{D_s}$ (Incremental Strategy)}  
    \label{alg:greedy-simple}
    \begin{algorithmic}
      \REQUIRE Dataset $D_s$, observed values $f_t(\bx)$ for all $t \in \{1, \ldots, T\}$ and $\bx \in D_s$, set size $K$.

\STATE Initialize $A \gets \emptyset$. \COMMENT{Start with an empty set.}
\STATE Compute the initial coverage score for $A$: $c(A) \gets 0$.

\FOR{$k = 1$ to $K$}
    \STATE Initialize $\bx_{\text{best}} \gets \text{None}$ and $\Delta_{\text{best}} \gets -\infty$. 
    \FOR{$\bx \in D_s \setminus A$}
        \STATE Compute marginal coverage of adding $\bx$ to $A$: 
        \[
        \Delta c \gets \sum_{t=1}^T \max \left(\max_{\bx' \in A} f_t(\bx'), f_t(\bx)\right) - c(A).
        \]
        \IF{$\Delta c > \Delta_{\text{best}}$}
            \STATE $\bx_{\text{best}} \gets \bx$ and $\Delta_{\text{best}} \gets \Delta c$.
        \ENDIF
    \ENDFOR
    \STATE Update $A \gets A \cup \{\bx_{\text{best}}\}$, $c(A) \gets c(A) + \Delta_{\text{best}}$. 
\ENDFOR

\STATE \textbf{Output:} $A^{*}_{D_s} \gets A$.
    \end{algorithmic}
    \rule{\linewidth}{0.4pt} 
  \end{minipage}
  \vspace{-25pt}
\end{wrapfigure}

The candidate acquisition method in \cref{sec:eci} utilizes the best covering set of $K$ points among all data collected so far, $S^{*}_{D_s}$. 
On each step of optimization $s$, \ourmethod{} must therefore construct $S^{*}_{D_s}$ from all observed data $D_s$, as $S^{*}_{D_s}$ may change on each step of optimization after new data is added to $D_s$.

\begin{lemma}[NP-hardness of Optimal Covering Set]
\label{lemma:np_hardness} 
Let $T, K$ be finite positive integers such that $K < T$. Let $f_1, \ldots, f_T$ be real valued functions. Let $D_s = \left\{ (\bx_1, \by_1), \ldots, (\bx_n, \by_n)  \right\}$ be a dataset of $n$ real valued data points such that for all $\bx_i$, $\by_i = (f_1(\bx_i), \ldots, f_T(\bx_i))$.  
Let $S^{*}_{D_s}$ be the optimal covering set of size $K$ in $D_s$ as defined in \cref{eq:problem_def_each_step}. Then, constructing $S^{*}_{D_s}$ is NP-hard.
\end{lemma}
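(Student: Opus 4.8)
The plan is to establish NP-hardness through a polynomial-time reduction from the classical \emph{Maximum Coverage} problem (the optimization/decision version of Max-$K$-Cover), which is NP-hard. The key observation is that the coverage score $c$ in \cref{eq:problem_def_each_step} is exactly a set-coverage objective once the observed objective values are binary: when each $f_t$ takes values in $\{0,1\}$, the inner $\max_{k} f_t(\bx_k)$ is an indicator of whether \emph{some} chosen point ``activates'' objective $t$, and summing over $t$ counts activated objectives. This lets me embed an arbitrary coverage instance directly as a coverage optimization instance over a finite dataset, which is admissible because the lemma only requires the $f_t$ to be real-valued (and $\{0,1\}$-valued functions trivially qualify).

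Concretely, I start from an arbitrary Maximum Coverage instance: a universe $U = \{e_1, \ldots, e_T\}$ of $T$ elements, subsets $S_1, \ldots, S_n \subseteq U$, and an integer $K < T$. I build a coverage optimization instance by associating each subset $S_i$ with a data point $\bx_i$ and each universe element $e_t$ with an objective $f_t$, fixing the observed values by the incidence structure,
\begin{align*}
f_t(\bx_i) = \mathbf{1}\!\left[\, e_t \in S_i \,\right], \qquad t \in \{1,\ldots,T\}, \; i \in \{1,\ldots,n\}.
\end{align*}
This yields a dataset $D_s = \{(\bx_1, \by_1), \ldots, (\bx_n, \by_n)\}$ with $\by_i = (f_1(\bx_i), \ldots, f_T(\bx_i))$, and the construction runs in time polynomial in $n$ and $T$.

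The heart of the argument is a value-preserving identity. For any size-$K$ candidate set $A = \{\bx_{i_1}, \ldots, \bx_{i_K}\}$, since each $f_t$ is $\{0,1\}$-valued, $\max_k f_t(\bx_{i_k}) = 1$ precisely when $e_t$ belongs to at least one selected subset, so
\begin{align*}
c(A) = \sum_{t=1}^{T} \max_{k=1}^{K} f_t(\bx_{i_k}) = \Big| \bigcup_{j=1}^{K} S_{i_j} \Big|.
\end{align*}
Thus the coverage score of $A$ equals the number of universe elements covered by the corresponding subsets, and computing $S^{*}_{D_s}$ (maximizing $c$ over size-$K$ subsets of $D_s$, as in \cref{eq:problem_def_each_step}) is identical to selecting $K$ subsets of maximum coverage. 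Any exact polynomial-time procedure for $S^{*}_{D_s}$ would therefore solve Maximum Coverage exactly in polynomial time, forcing $\mathrm{P} = \mathrm{NP}$; hence constructing $S^{*}_{D_s}$ is NP-hard.

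I expect the main obstacle to be bookkeeping around the cardinality parameter rather than the reduction itself. One must confirm that Maximum Coverage remains NP-hard under the standing assumption $K < T$; this holds because, after discarding any element contained in no subset (a trivial NO case detectable in polynomial time), any instance with $K \ge |U|$ is a trivial YES, so all the hardness already lives in instances with $K < |U| = T$ (and, if desired, one can pad $U$ with dummy elements to force $K < T$ without altering the optimum). A secondary check is simply that the binary encoding is polynomially sized and falls within the lemma's real-valued hypotheses. I note that the submodularity and monotonicity of $c$ are not needed for the hardness claim, but they are exactly what justifies the greedy $(1-\tfrac{1}{e})$-approximation of \cref{alg:greedy-simple} as the natural tractable relaxation of this NP-hard subproblem.
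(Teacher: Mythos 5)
Your proposal is correct and follows essentially the same route as the paper's own proof: a polynomial-time reduction from Maximum Coverage in which each subset becomes a data point, each universe element becomes an objective with binary incidence values $f_t(\bx_i) = \mathbf{1}[e_t \in S_i]$, and the coverage score $c$ then coincides with the number of covered elements. Your additional care about preserving the standing assumption $K < T$ (via padding or discarding trivial instances) is a detail the paper's proof glosses over, but it does not change the argument.
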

\vspace{-1ex}
\begin{proof}
The proof is in \cref{sec: nphard}. 
\end{proof}
\vspace{-1ex}
Since constructing $S^{*}_{D_s}$ is NP-hard, we use an approximate construction of $S^{*}_{D_s}$ on each step of optimization $s$. 
We present \cref{alg:greedy-simple}, which is based on greedy submodular optimization~\citep{greedysubmodular},
that provides a $(1 - \frac{1}{e})$-approximation of $S^{*}_{D_s}$ after an execution time of $O(n K T)$. 

\begin{theorem}
\label{theorem:greedy-simple-approx} 
The set $A^{*}_{D_s}$ output by \cref{alg:greedy-simple} satisfies
$c(A^{*}_{D_s}) \geq \left(1 - \frac{1}{e}\right)c(S^{*}_{D_s})$.
\end{theorem}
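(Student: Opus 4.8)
The plan is to recognize that the coverage score $c$, viewed as a set function on subsets of $D_s$, is monotone and submodular, and then to invoke the classical greedy guarantee for cardinality-constrained submodular maximization \citep{greedysubmodular}. First I would extend the coverage score to arbitrary subsets $A \subseteq D_s$ by writing $c(A) = \sum_{t=1}^{T} \max_{\bx \in A} f_t(\bx)$, with the convention $c(\emptyset) = 0$ that matches the initialization in \cref{alg:greedy-simple}. I would then observe that \cref{alg:greedy-simple} is exactly the greedy algorithm for this set function: over $K$ rounds it repeatedly adjoins the element of $D_s \setminus A$ whose marginal gain $\Delta c$ is largest, and $S^{*}_{D_s}$ from \cref{eq:problem_def_each_step} is precisely the cardinality-$K$ maximizer of $c$ over $D_s$.

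Second, I would establish the two structural properties. Monotonicity is immediate, since enlarging $A$ can only increase each inner maximum $\max_{\bx \in A} f_t(\bx)$; summing over $t$ yields $c(A) \leq c(B)$ whenever $A \subseteq B$. Submodularity follows from the decomposition $c = \sum_{t=1}^{T} h_t$, where $h_t(A) = \max_{\bx \in A} f_t(\bx)$ is a facility-location-type maximum. For each $t$, and for $A \subseteq B$ with $\bx \notin B$, the marginal gain is $h_t(A \cup \{\bx\}) - h_t(A) = \max\!\left(0,\; f_t(\bx) - \max_{\bx' \in A} f_t(\bx')\right)$, which is at least $\max\!\left(0,\; f_t(\bx) - \max_{\bx' \in B} f_t(\bx')\right)$ because $\max_{\bx' \in A} f_t(\bx') \leq \max_{\bx' \in B} f_t(\bx')$. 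This is exactly the diminishing-returns inequality, so each $h_t$ is submodular, and since a sum of submodular functions is submodular, $c$ is submodular.

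Third, with $c$ monotone, nonnegative, and submodular, I would apply the Nemhauser--Wolsey--Fisher theorem \citep{greedysubmodular}: greedy maximization under a cardinality-$K$ constraint returns a set whose value is within a factor $1 - \frac{1}{e}$ of the optimum. Since the greedy output $A^{*}_{D_s}$ of \cref{alg:greedy-simple} maximizes $\Delta c$ at each step, this gives $c(A^{*}_{D_s}) \geq \left(1 - \frac{1}{e}\right) c(S^{*}_{D_s})$, which is the claim.

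The main obstacle I anticipate is not the core submodularity computation---which is the standard facility-location argument---but the boundary/nonnegativity hypothesis required by the classical guarantee. Both nonnegativity of $c$ and monotonicity across the transition from $\emptyset$ to a singleton rely on the objectives $f_t$ being nonnegative (or suitably normalized); a naive additive shift would rescale the multiplicative ratio, so I would either invoke nonnegativity of the objectives in the relevant setting or verify that the greedy selections, and hence the approximation guarantee, are unaffected by such normalization. Confirming that this technical point is handled cleanly, and that \cref{alg:greedy-simple} matches the greedy template exactly, is where I would concentrate the rigor.
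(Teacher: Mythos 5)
Your proof follows essentially the same route as the paper's: establish monotonicity and submodularity of $c$ (the paper computes the marginal gain of the whole sum directly with a case split, while you decompose into per-objective facility-location terms $h_t$ and use closure of submodularity under sums---the same calculation), then invoke the Nemhauser--Wolsey--Fisher greedy guarantee to conclude $c(A^{*}_{D_s}) \geq \left(1 - \frac{1}{e}\right) c(S^{*}_{D_s})$. The nonnegativity caveat you flag is genuine and is in fact glossed over by the paper's own proof: with the convention $c(\emptyset) = 0$, the classical multiplicative $(1-1/e)$ bound requires $c(S^{*}_{D_s}) \geq 0$ (e.g.\ nonnegative or suitably normalized $f_t$, which is not automatic here since some tasks optimize quantities like $-\mathrm{MIC}$), and as you note an additive shift changes the ratio, so insisting on stating that hypothesis explicitly strengthens rather than weakens the argument.
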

\vspace{-2ex}
\begin{proof}
The proof is in \cref{sec: greedyproof}. 
\end{proof}
\vspace{-1ex}

\vspace{-1ex}
\paragraph{Time Complexity of \cref{alg:greedy-simple}.}
The algorithm iterates $K$ times to construct the covering set. In each iteration, it evaluates at most $n$ candidate points, and for each candidate, it computes the incremental coverage score by iterating over $T$ objectives. 
The execution time complexity is thus $O(K \cdot n \cdot T)$. 
For practical applications where we can assume relatively small $K$ and $T$, the execution time is approximately $O(n)$. 
For an empirical evaluation of the execution time, see \cref{sec:wallclock}. 

\vspace{1ex}
\begin{corollary}[\cref{alg:greedy-simple} is the best possible approximation of $S^{*}_{D_s}$]
\label{corollary:greedy-is-best} 
There is no polynomial execution time algorithm that provides a better approximation ratio unless $P = NP$. 
\end{corollary}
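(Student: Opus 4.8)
The plan is to prove the corollary by exhibiting the classical Maximum Coverage problem as a special case of constructing $S^{*}_{D_s}$, and then invoking Feige's inapproximability result to transfer its hardness. As a preliminary observation I would note that the coverage score $c(S) = \sum_{t=1}^{T} \max_{\bx \in S} f_t(\bx)$ is monotone and submodular (a nonnegative sum of monotone submodular ``max'' functions), which is precisely why the $(1-\frac{1}{e})$ guarantee of \cref{theorem:greedy-simple-approx} is the natural threshold; the content of the corollary is the matching lower bound, so this remark is purely contextual and the real work is the reduction.

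The key step is the reduction. Given an instance of Maximum Coverage---a universe $U = \{1, \ldots, T\}$, a family of subsets $R_1, \ldots, R_n \subseteq U$, and a budget $K$---I would build a coverage-optimization instance by identifying each subset $R_i$ with a data point $\bx_i$ and each universe element $t$ with an objective $f_t$, setting $f_t(\bx_i) = 1$ when $t \in R_i$ and $f_t(\bx_i) = 0$ otherwise. Under this encoding, for any selection $\{\bx_{i_1}, \ldots, \bx_{i_K}\}$ one has $\max_{k} f_t(\bx_{i_k}) = 1$ exactly when element $t$ is covered by at least one chosen subset, so $c(\{\bx_{i_1}, \ldots, \bx_{i_K}\}) = |\bigcup_{k} R_{i_k}|$. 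Thus $S^{*}_{D_s}$ corresponds precisely to an optimal Maximum Coverage solution, and because objective values coincide exactly for \emph{every} selection, the correspondence preserves approximation ratios: any $\beta$-approximation for constructing $S^{*}_{D_s}$ yields a $\beta$-approximation for Maximum Coverage. The reduction is plainly polynomial.

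With this in place I would invoke Feige's theorem: for every $\epsilon > 0$, it is NP-hard to approximate Maximum Coverage within a factor of $1 - \frac{1}{e} + \epsilon$. Consequently, if some polynomial-time algorithm guaranteed a constant approximation ratio strictly greater than $1 - \frac{1}{e}$ for constructing $S^{*}_{D_s}$, then running it on the reduced instance would approximate Maximum Coverage beyond the Feige threshold, forcing $P = NP$. Combined with \cref{theorem:greedy-simple-approx}, this pins the optimal ratio exactly at $1 - \frac{1}{e}$.

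The main obstacle I anticipate is bookkeeping rather than conceptual: I must verify that the parameter regime is legitimate, in particular that the constraint $K < T$ built into coverage optimization is compatible with the instances on which Feige's hardness holds. Since Feige's construction permits the budget to be taken well below the universe size $|U| = T$, the restriction $K < T$ does not weaken the reduction. A secondary point of care is the exact reading of ``better approximation ratio'': because the hardness excludes $1 - \frac{1}{e} + \epsilon$ for \emph{every} fixed $\epsilon > 0$, the clean conclusion is that no polynomial-time algorithm can guarantee a constant ratio exceeding $1 - \frac{1}{e}$, which is exactly the statement that makes \cref{alg:greedy-simple} optimal.
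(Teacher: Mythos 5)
Your proposal is correct and follows essentially the same route as the paper: the paper's (much terser) proof simply points back to the Max $k$-cover reduction used for \cref{lemma:np_hardness} and cites the Feige inapproximability threshold, which is exactly the reduction-plus-Feige argument you spell out. Your additional care about approximation-ratio preservation and the $K < T$ parameter regime fills in details the paper leaves implicit, but introduces no new ideas beyond its approach.
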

\vspace{-1ex}
\begin{proof}
This follows from the fact that we reduced from Max $k$-cover, for which a better approximation ratio is not practically achievable unless $P = NP$ \citep{set-cover-lnn}. 
\end{proof}

\vspace{-1ex}
\subsubsection{Extending ECI to the Batch Acquisition Setting (q-ECI)}
\label{sec:qeci}
In batch acquisition, we select a batch of $q > 1$ candidates for evaluation.
Following recent work on batch EI \citep{wilson2018maximizing, qei2}, we define q-ECI, a natural extension of ECI to the batch setting: 
\begin{align}
\label{eq:full-q-eci}
q\text{-ECI}(\mathbf{X}) &= \mathbb{E}_{p(\mathbf{Y} | \mathbf{X}, D)}
\Big[ \max\Big\{\, 0, \, \max_{r=1, \dots, q} c\big(S^{*}_{D_s \cup \left\{(\bx_r, \by_r)\right\}} \big) - c\left(S^{*}_{D_s}\right) \,\Big\} \Big].
\end{align}
q-ECI gives the expected improvement in coverage score after simultaneously observing the batch of $q$ points $\mathbf{X} = \left\{\bx_1, \ldots, \bx_q \right\}$.
However, the resulting Monte Carlo expectation would require $O(q \times m)$ evaluations of \cref{alg:greedy-simple}. We therefore adopt a more approximate batching strategy for practical use with large $q$ (see \cref{sec:approx-q-eci} for details).

\vspace{-1ex}
\subsubsection{\ourmethod{} with Trust Regions}
In order to find a set of $K$ solutions, as explained in \cref{sec: background}, \ourmethod{} follows the \turbo{}-$M$ principle.
That is, it maintains $K$ simultaneous local optimization runs using $K$ individual trust regions. 
Each local run $k \in \left\{1, \ldots, K \right\}$ aims to find a single solution $\bx^{*}_{k}$, which together form the desired set $S^{*}$. 
As in the original \turbo{} paper, trust regions are rectangular regions of the search space $\tr{k} \subseteq \mathcal{X}$ defined solely by their size and center point.
We center the $K$ trust regions on the best covering set of solutions observed so far during optimization. 
In particular, on each optimization step $s$, we construct an approximation of $S^{*}_{D_s}$ using \cref{alg:greedy-simple}, and center each trust region $\tr{k}$ on the corresponding point $\bx^{*(s)}_{k}$ in $S^{*}_{D_s} = \big\{ \bx^{*(s)}_{1}, \ldots, \bx^{*(s)}_{K} \big\}$. 
We then use our proposed ECI (or q-ECI when $q > 1$) acquisition function to select $q$ candidates for evaluation from within each trust region. 
The MC approximation of ECI described in \cref{sec:eci} can be straightforwardly applied to select candidates in trust region $\tr{k}$ by sampling the $m$ discrete points from within the rectangular bounds of $\tr{k}$. 
Since we select and evaluate $q$ candidates from each of the $K$ trust regions, the total number of observed data points $n$ at step $s$ of \ourmethod{} is $n = s \times K \times q$.
As in the original \turbo{} algorithm, each trust region $\tr{k}$ has success and failure counters that dictate the size of the trust region. For \ourmethod{}, we count a success for trust region $\tr{k}$ whenever $\tr{k}$ proposes a candidate on step $s$ that improves upon the best coverage score and is included in $S^{*}_{D_{s+1}}$.

\section{Experiments}
\label{sec:experiments}
\begin{figure*}[t]
\vspace{-2ex}
\begin{center}
\centerline{\includegraphics[width=1.0\columnwidth]{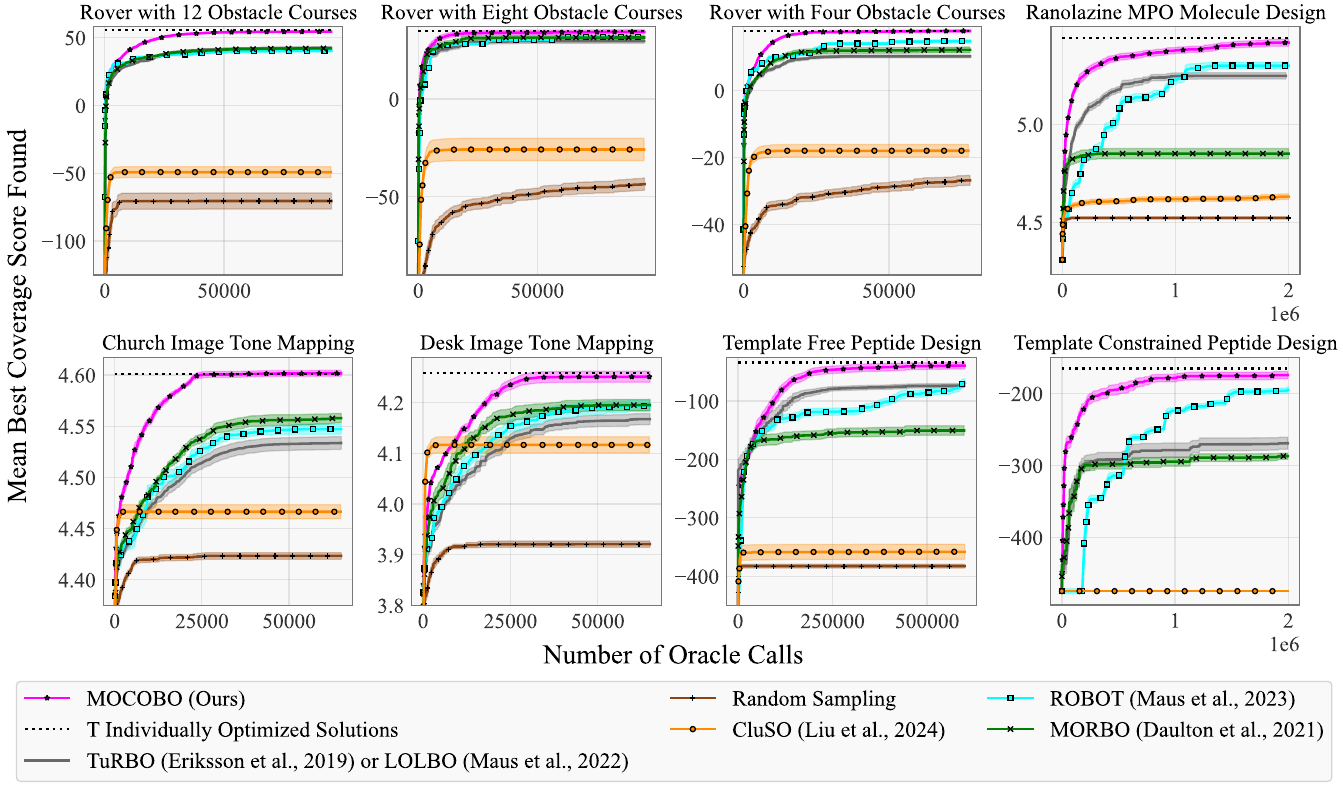}}
\caption{Coverage optimization results on all tasks considered. 
}
\label{fig:main}
\end{center}
\vskip -0.3in
\end{figure*}
We evaluate \ourmethod{} on four high-dimensional, multi-objective BO tasks for which finding a set of $K<T$ solutions to cover the $T$ objectives is desirable. Detailed descriptions of each task are in \cref{sec: tasks}. Two tasks involve continuous search spaces, allowing direct application of \ourmethod{}, while the other two involve structured spaces (molecules and peptides), requiring an extension for structured optimization.

\vspace{-1.5ex}
\paragraph{Implementation details and hyperparameters.}
We implement \ourmethod{} using BoTorch~\cite{balandat2020botorch} and GPyTorch~\cite{gardner2018gpytorch}. Code to reproduce \ourmethod{} results on all tasks considered is available on GitHub: \url{https://github.com/nataliemaus/mocobo}.
We use an acquisition batch size of $20$ for all tasks and across all BO methods compared.
Since we consider challenging high-dimensional tasks that require a large number of function evaluations, we use approximate GP surrogate models, specifically PPGPR~\cite{PPGPR}.
Further implementation details are provided in \autoref{sec:detials}.

\paragraph{Structured Search Spaces.} 
Some of our optimization tasks operate on structured search spaces that are not Euclidean.
For this, it is typical to employ a generative model\textemdash{}such as a variational autoencoder (VAE; \citealp{kingma2013auto,rezende2014stochastic})\textemdash{}to convert structured inputs into a continuous latent space for BO \cite{eissman2018bayesian, r2-cite5-lsbo, r2-cite10-lsbo-cobo, r2-cite7-lsbo-gvae, r2-cite8-lsbo, Weighted_Retraining, Huawei, siivola2021good, JTVAE, lambo, lolbo, r2-cite9-lsbo-nfbo, r2-cite10-lsbo-cobo}, resulting in an algorithm known as \textit{latent space BO}.
Naturally, for problems with structured search spaces, we apply \ourmethod{} to the latent space of a pre-trained VAE on which we perform regular end-to-end updates with the surrogate model during optimization \citep{lolbo}.

\vspace{-1.5ex}
\paragraph{Plots.} 
In \cref{fig:main}, we plot the best coverage score $c$ \cref{eq:coverage_score} obtained by the $K$ best covering solutions found so far after a certain number of function evaluations on each task. 
Since BO baseline methods are not designed to optimize coverage directly, instead aiming to find a single solution or a set of solutions to optimize the $T$ objectives, we plot the best coverage score $c$ obtained by the $K$ best covering solutions found by the method. All plots show mean coverage scores averaged over $20$ replications of each method, and show standard errors. 

\vspace{-1.5ex}
\paragraph{Baselines.} 
In all plots, we compare \ourmethod{} against \cluso{}, \turbo{}, \robot{}, and \morbo{} \citep{cluso, turbo, robot, morbo}. 
While \morbo{} is not designed to optimize coverage, the MOBO setting is closely related to our problem setting, and \morbo{} represents a strong baseline among existing MOBO techniques.
We also compare to sampling uniformly at random in the search space. For each method, we compare to the performance of the best set of $K$ solutions found. 

\vspace{-1.5ex}
\paragraph{Extending baselines to the structured BO setting.} 
In the case of the two structured optimization tasks (molecule and peptide design), we apply \lolbo{} and \robot{} as described by design for these problem settings. For \morbo{} and \cluso{}, we take the straightforward approach of applying the methods directly in the continuous latent space of the VAE model without other adaptations. The same pre-trained VAE model is used across methods compared. 

\begin{table}[!ht]
\vskip -0.2in
\caption{\textbf{Best set of $K=4$ peptides found by one run of \ourmethod{} for the ``template free" peptide design task.}
We provide the APEX model's predicted MIC for each sequence on each of the $11$ target pathogenic bacteria.
The target pathogenic bacteria B1$, \ldots,$ B11 are listed in \cref{tab:bacteria}. (-) and (+) indicate Gram negative and Gram positive pathogenic bacteria respectively. 
The best/lowest MIC achieved for each pathogenic bacteria is in bold in each column. See row ``TF1" in \cref{fig:lab-coverage-only}, and \cref{fig:lab-result-full} for \textit{in vitro} MICs for this set of $K=4$ peptides. See \cref{tab:template-constrained-result} for an analogous result on the ``template constrained" peptide design task. 
}
\label{tab:template-free-result}
\centering
\vspace{1ex}
\resizebox{\columnwidth}{!}{
    \begin{tabular}{lccccccccccc}
        \toprule
        Peptide Amino Acid Sequence & B1(-) & B2(-) & B3(-) & B4(-) & B5(-) & B6(-) & B7(-) & B8(+) & B9(+) & B10(+) & B11(+) \\
        \midrule
        \texttt{KKKKLKLKKLKKLLKLLKRL}  & 1.017 & 1.040 & 1.893 & \bf{0.999} & 8.613 & \bf{0.966} & \bf{1.039} & 65.999 & 38.361 & 338.692 & 1.393\\
        \texttt{IFHLKILIKILRLL} & 0.999 & 15.565 & 1.860 & 1.952 & 404.254 & 486.860 & 406.034 & \bf{1.233} & \bf{1.318} & \bf{7.359} & \bf{0.981} \\
        \texttt{SKKIKLLGLALKLLKLKLKL}  &  2.654 & 3.268 & 3.113 & 4.854 & \bf{4.923} & 12.967 & 14.610 & 22.631 & 29.685 & 254.306 & 3.947\\
        \texttt{KKKKLKLKKLKRLLKLKLRL}  &  \bf{0.939} & \bf{0.906} & \bf{1.124} & 1.310 & 10.909 & 1.384 & 1.711 & 12.776 & 32.884 & 434.193 & 1.037\\
        \bottomrule
    \end{tabular}
}
    \vskip -0.1in
\end{table}

\vspace{-1.5ex}
\paragraph{Extending baselines to the coverage optimization setting.} 
As \turbo{}, \lolbo{}, and \robot{} target single-objective optimization, we conduct $T$ independent runs to optimize each of the $T$ objectives for the multi-objective task. 
We use all solutions gathered from the $T$ runs to compute the best covering set of $K$ solutions found by each method. 
Unlike \lolbo{} and \turbo{}, which seek a single best solution for a given objective, a single run of \robot{} seeks a set of $M$ solutions that are pairwise diverse. We run \robot{} with $M=K$ so that each independent run of \robot{} seeks $M=K$ diverse solutions. The aggregate result of the $T$ independent runs for \robot{} is thus $K$ diverse solutions for each of the $T$ objectives. We compare to the best covering $K$ solutions from among those $K*T$ solutions. See \cref{sec: robot-diversity-hypers} for more details on the diversity constraints used by \robot{} and the associated hyperparameters. 
\morbo{} is a multi-objective optimization method and can thus be applied directly to each multi-objective optimization task.
For each run of \morbo{}, we compare to the best covering set of $K$ solutions found among all solutions proposed by the run. 
For \cluso{}, no extension is needed as this method is designed for coverage optimization. We run \cluso{} with the same $K$ used by \ourmethod{}, and all other hyperparameters set to \cluso{} defaults. 

As far as we are aware, \cluso{} is the only existing method that tries to solve the black-box coverage optimization problem directly. 
We note that other adapted baselines are included not to criticize, but to underscore the importance of explicitly addressing this problem class. 

\vspace{-1.5ex}
\paragraph{T individually optimized solutions baseline.} 
We also compare to a brute-force method involving $T$ separate single-objective optimizations for each of the $T$ objectives, using \turbo{} for each run or \lolbo{} for molecule and peptide design. This is not an alternative for finding $K<T$ covering solutions, but instead identifies $T$ solutions, one per objective, approximating a ceiling we can achieve on performance without the limit of $K < T$ solutions. Approaching the performance of this baseline implies that we can find $K$ solutions that do nearly as well as if we were allowed $T$ solutions instead.

\vspace{-1ex}
\subsection{Tasks}
\label{sec: tasks}

\vspace{-1ex}
\paragraph{Peptide design.}
In the peptide design task, we explore amino acid sequences to minimize the MIC (minimum inhibitory concentration, measured in $\si{\micro\mole\per\liter}$) for each of $T=11$ target drug resistant strains or otherwise challenging to kill bacteria (B1-B7 Gram negative, B8-B11 Gram positive). \cref{tab:bacteria} lists our target bacteria in this study. Briefly, MIC indicates the concentration of peptide needed to inhibit bacterial growth (see \citealt{kowalska2021minimum}).
We evaluate MIC for a given peptide sequence and bacteria using the APEX 1.1 model proposed by \citet{apex1}. To frame the problem as maximization, we optimize $-\text{MIC}$. We seek $K=4$ peptides that together form a potent set of antibiotics for all $T=11$ bacteria. To enable optimization over peptides, we use the VAE model pre-trained on $4.5$ million amino acid sequences from \citet{apexgo} to map the peptide sequence search space to a continuous $256$ dimensional space.

\vspace{-1.5ex}
\paragraph{Template free vs template constrained peptide design.}
We evaluate \ourmethod{} on two variations of the peptide design task: ``template free" (TF) and ``template constrained" (TC). For TF, we allow the optimizer to propose any sequence of amino acids. For TC, we add a constraint that any sequence proposed by the optimizer must have a minimum of $75$ percent sequence similarity to at least one of the $10$ template amino acid sequences in \cref{tab:templates}. These $10$ templates were mined from extinct organisms and selected by \citet{apex1}. The motivation of the template constrained task is to design peptides specifically likely to evade antibiotic resistance by producing ``extinct-like'' peptides that bacteria have not encountered in nature in thousands of years. For \ourmethod{} and all BO baselines (\turbo{}, \lolbo{}, \robot{}, and \morbo{}), we handle the optimization constraint by adapting techniques from \texttt{SCBO} \citep{scbo}. For \cluso{} and random sampling, we handle the constraint with rejection sampling. 

\vspace{-1.5ex}
\paragraph{Ranolazine MPO molecule design.}
Ranolazine is a drug used to treat chest pain. The original \texttt{Ranolazine MPO} task from the Guacamol benchmark suite of molecular design tasks \citep{GuacaMol} aims to design an alternative to this drug: a molecule with a high fingerprint similarity to Ranolazine that includes fluorine.
We extend this task to the multi-objective optimization setting by searching for alternatives to Ranolazine that include $T=6$ reactive nonmetal elements not found in Ranolazine: fluorine, chlorine, bromine, selenium, sulfur, and phosphorus. We aim to cover the $T=6$ objectives with $K=3$ molecules. 
We use the SELFIES-VAE introduced by \citet{lolbo} to map the molecular space to a continuous $256$ dimensional space.

\begin{figure}[!ht]
\vspace{-2ex}
\begin{center}
\centerline{\includegraphics[width=0.6\columnwidth]{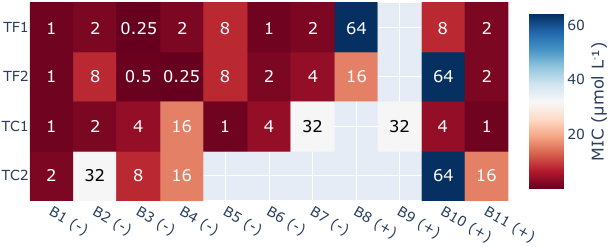}}
\caption{\textit{In vitro} results for the two best ``template free" (TF1, TF2) and two best ``template constrained" (TC1, TC2) runs of \ourmethod{} for the peptide design task. 
Columns are the best/lowest \textit{in vitro} MIC among the $K=4$ peptides found by \ourmethod{} for each target pathogenic bacteria B1$, \ldots,$ B11 listed in \cref{tab:bacteria}. (-) and (+) indicate Gram negative and Gram positive respectively. TF1 and TC1 correspond to the single runs of \ourmethod{} shown in \cref{tab:template-free-result} and \cref{tab:template-constrained-result} respectively. Methods used to obtain \textit{in vitro} MICs are provided in \cref{sec:marcelo-lab-methods}.
} 
\label{fig:lab-coverage-only}
\end{center}
\vskip -0.2in
\end{figure}
\begin{figure}[!ht]
\begin{center}
\centerline{\includegraphics[width=0.8\columnwidth]{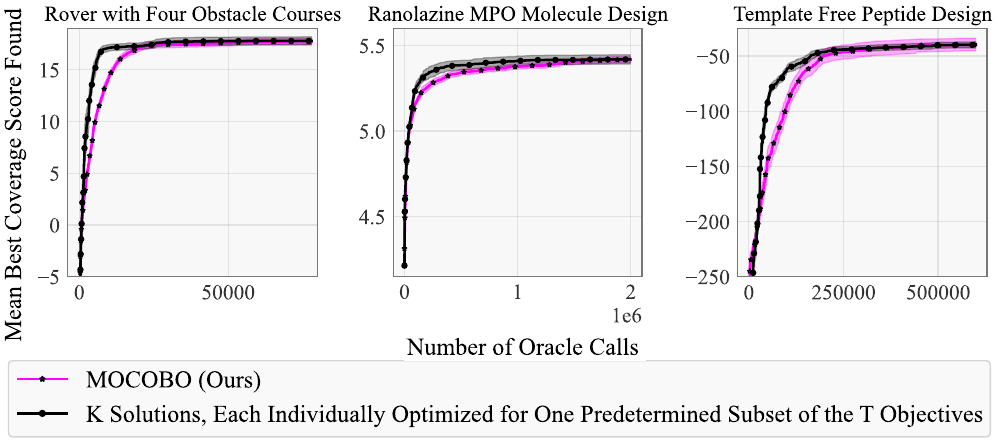}}
\caption{Ablation study comparing \ourmethod{} to optimization performance where a known ``good'' partitioning of the $T$ objectives into $K$ subsets is available in advance. We individually optimize $K$ solutions, one for each partition.
} 
\label{fig:ab-partition}
\end{center}
\vskip -0.3in
\end{figure}
\vspace{-1.5ex}
\paragraph{Rover.} 
The rover optimization task introduced by \citet{ebo} consists of finding a $60$-dimensional policy that allows a rover to move along some trajectory while avoiding a fixed set of obstacles. To frame this as a multi-objective optimization task, we design $T$ unique obstacle courses for the rover to navigate. The obstacle courses are designed such that no single policy can successfully navigate all courses. We seek $K < T$ policies so that at least one policy enables the rover to avoid obstacles in each course. We evaluate on three unique instances with varying numbers of obstacle courses. For the instances of this task with $T=4$ and $T=8$, we seek to cover the objectives with $K=2$ solutions. For the $T=12$ instance, we seek to cover the objectives with $K=4$ solutions. 
\vspace{-1.5ex}
\paragraph{Image tone mapping.}
In high dynamic range (HDR) images, some pixels (often associated with light sources) can dominate overall contrast, requiring adjustments to reveal detail in low-contrast areas, a problem known as \textit{tone mapping}~\citep[Section 6]{reinhard2005high}. Tone mapping algorithms involve various tunable parameters, resulting in a high-dimensional optimization problem of subjectively perceived quality. We seek a covering set of $K=4$ solutions to optimize a set of $T=7$ image aesthetic (IAA) and quality (IQA) assessment metrics from the \texttt{pyiqa} library~\citep{chaofeng2022iqapytorch} (see metrics listed in \cref{tab:img-metrics}). 
Our practical goal is that, while we do not know \textit{a priori} which metric is best for a particular image, covering all metrics may result in at least one high quality image.
We optimize over the $13$-dimensional parameter space of an established tone mapping pipeline to tone-map the ``Stanford Memorial Church"~\cite{1997recovering} and ``desk lamp"~\cite{cadik2008phd} benchmark images.
See \cref{sec:image-task-more-detials} for details.

\vspace{-1.5ex}
\subsection{Optimization Results}
\vspace{-1ex}
\label{sec: results}
In \cref{fig:main}, we provide optimization results comparing \ourmethod{} to the baselines discussed above on all tasks. 
The results show that \ourmethod{} finds sets of $K$ solutions that achieve higher coverage scores across tasks. The ``T Individually Optimized Solutions" baseline appears as a horizontal dotted line in all plots of \cref{fig:main}, representing the average coverage score of $T$ individually optimized solutions, serving as an approximation of the best possible performance \textit{without} the constraint of a limited $K < T$ solution set.
Results in \cref{fig:main} demonstrate that the smaller set of $K<T$ solutions identified by \ourmethod{} nearly equals the performance of the complete set of $T$ individually optimized solutions.
This result depends on using domain knowledge to choose $K$ large enough to achieve it. Results in \cref{fig:main} show that matching the performance of $T$ optimized solutions is possible with some values of $K \ll T$.

Although \cluso{} exhibits substantially lower performance than other baselines across the tasks we consider\textemdash{}including baselines not explicitly designed for the coverage problem\textemdash{}this should not be viewed as a criticism of the method. Rather, it reflects that \cluso{} was developed for low-dimensional black-box problems and does not scale effectively to the high-dimensional and structured optimization settings we study, further motivating the need for our scalable \ourmethod{} approach.

\vspace{-1.5ex}
\paragraph{Peptide design results.}
In \cref{tab:template-free-result}, we provide the $K=4$ peptides found by one run of \ourmethod{} for the template free (TF) peptide design task. 
For each peptide, we provide the the APEX 1.1 model's predicted MIC for each of the $11$ target bacteria.
MIC values $\leq \SI{16}{\micro\mole\per\liter}$ are considered to be ``highly active" against the target pathogenic bacteria \citep{apex1}.
We highlight in \cref{tab:template-free-result} the comparison between the second peptide with the other three peptides. 
B1-B7 are Gram negative (GN) bacteria, while B8-B11 are Gram positive (GP). Peptide 2 specialized to the GP bacteria (B8-B11 scores predicted highly active compared to the other peptides) at the expense of broad spectrum activity for the GN bacteria (B5, B6, B7 predicted inactive). By specializing its solutions to GN and GP bacteria separately, \ourmethod{} achieves low MIC across the target bacteria with only $K=4$ peptides. A similar table of results is also provided for the template constrained (TC) peptide design task (see \cref{tab:template-constrained-result}). 

\begin{figure*}[!ht]
\begin{center}
\centerline{\includegraphics[width=0.7\columnwidth]{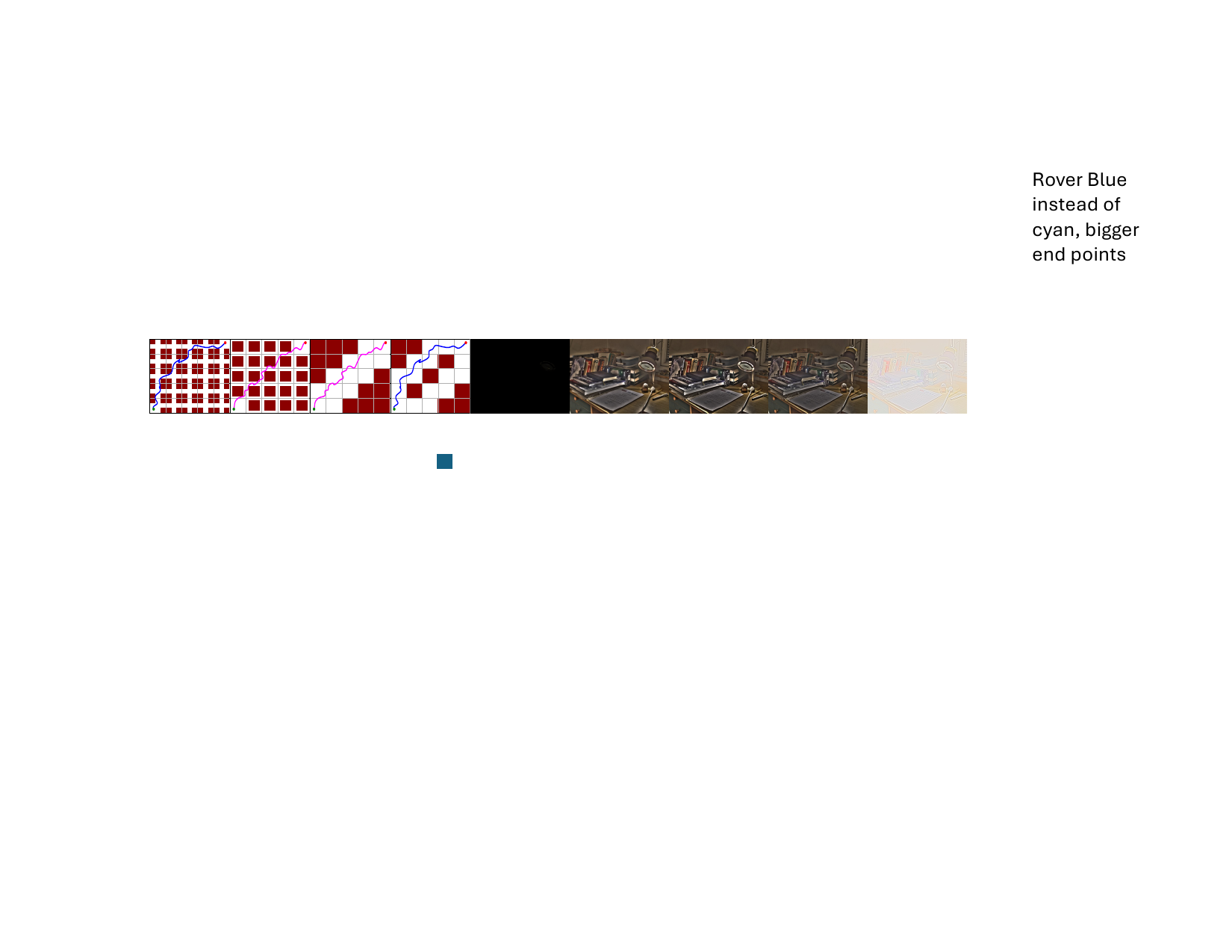}}
\caption{$T=4$ varied obstacle courses as multiple objectives for the rover task and $K=2$ covering trajectories (blue, magenta) found by \ourmethod{}. The first trajectory (magenta) successfully navigates obstacle courses 2 and 3 (second and third panels from the left). The second trajectory (blue) navigates obstacle courses 1 and 4 (first and fourth panels from the left).
} 
\label{fig:rover4-only}
\end{center}
\vskip -0.3in
\end{figure*}
\begin{figure*}[!ht]
\begin{center}
\centerline{\includegraphics[width=0.8\columnwidth]{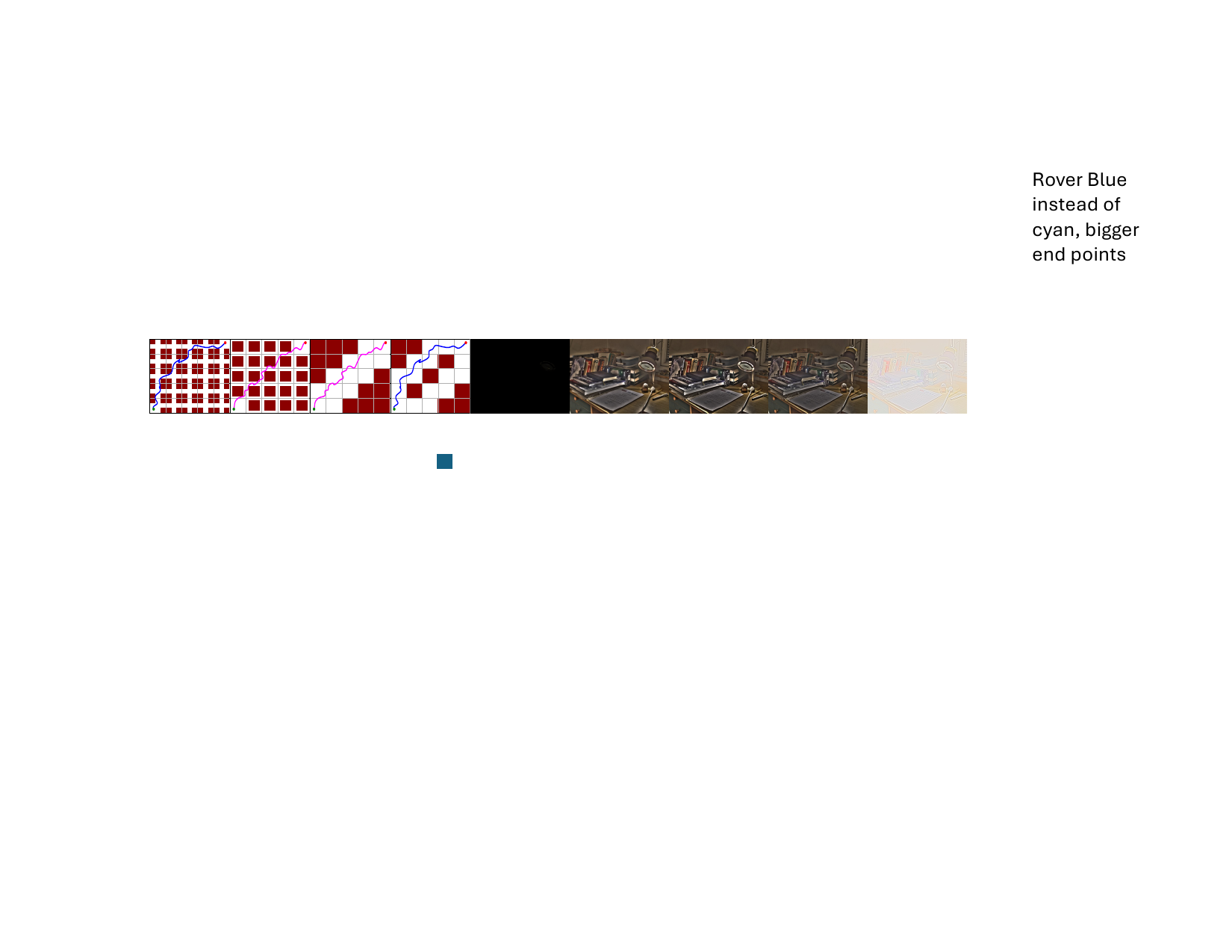}}
\caption{\textbf{(Leftmost Panel)} Initial hdr desk image for the desk variation of the ``image tone mapping" task. \textbf{(Four Rightmost Panels)} Images obtained by transforming the hdr desk image using the best covering set of $K=4$ solutions found by a single run of \ourmethod{}. 
} 
\label{fig:desk-only}
\end{center}
\vskip -0.3in
\end{figure*}

In \cref{fig:lab-coverage-only}, we provide \textit{in vitro} results for the two best TF and two best TC runs of \ourmethod{} for the peptide design task.
Here, ``best" means runs that achieved highest coverage scores according to the APEX 1.1 model. 
For each run, \cref{fig:lab-coverage-only} provides the best/lowest \textit{in vitro} MIC among the $K=4$ peptides found by \ourmethod{} for each target bacteria. 
Results demonstrate that solutions found by \ourmethod{} optimizing against the \textit{in-silico} APEX 1.1 model achieve good coverage of the $11$ target pathogenic bacteria \textit{in vitro}. 
In \cref{fig:lab-result-full}, we provide the full set of \textit{in vitro} results for each these runs of \ourmethod{}, with MIC values obtained by each of the $K=4$ peptides found for each target bacteria.
Methods used to obtain \textit{in vitro} MICs are provided in \cref{sec:marcelo-lab-methods}. 

\vspace{-1.5ex}
\paragraph{Molecule design results.}
In \cref{tab:rano-result}, we provide results for the $K=3$ molecules found by one run of \ourmethod{} for the molecule design task. Each of the $6$ target elements is successfully present in one of the $K=3$ molecules in the best covering set. These three molecules therefore effectively cover the $T=6$ objectives, with each objective having a max score $>0.9$.

\vspace{-1.5ex}
\paragraph{Rover results.}
In \cref{fig:rover4-only}, we depict the $4$ varied obstacle courses used for the $T=4$ variation of the rover task and $K=2$ covering trajectories found by a single run of \ourmethod{}.
In \cref{fig:rover8} and \ref{fig:rover12}, we provide analogous figures for the $T=8$ and $T=12$ variations. In each example, the \ourmethod{} optimized set of trajectories covers all obstacle courses such that all obstacles are avoided. 

\vspace{-1.5ex}
\paragraph{Image tone mapping results.}
In \cref{fig:desk-only} and \cref{fig:church_example}, we provide the original HDR images and the images obtained by the $K=4$ solutions found by a single run of \ourmethod{} for the church and desk test images, respectively. 
In both variations, three of the four solutions result in adequately tone-mapped images, while one results in a poor-quality image (see the middle-right church image and the rightmost desk image). 
The poor quality images were each generated by the one solution that \ourmethod{} used to cover metrics $5$ and $6$ (see metric ID numbers in \cref{tab:img-metrics}). 
This indicates that metrics $5$ and $6$ are poor indicators of true image quality in this setting. 
This result highlights a useful application of \ourmethod{}: By dedicating one of the $K$ solutions to maximizing the misleading metrics ($5$ and $6$), the other three solutions to can focus on the remaining subjectively better metrics.

\vspace{-1.5ex}
\subsection{Ablation study}
\vspace{-1ex}
\label{sec: ablation}
A challenging aspect of our problem setting is that we do not know \textit{a priori} the best way(s) to divide the $T$ objectives into $K$ subsets such that good coverage can be obtained. In this section, we seek to answer the question: \textit{what is the efficiency lost by \ourmethod{} due to not knowing an efficient partitioning of the objectives in advance?} To construct a proxy ``efficient'' partition to measure this, we first run \ourmethod{} to completion on a task, and then consider optimization as if we had known the partitioning of objectives found at the end in advance.

With such an ``oracle'' partitioning in hand ahead of time, we can efficiently optimize by running $K$ independent optimization runs: one to find a single solution for each subset in the fixed, given partition. With a fixed partition, the coverage score in \eqref{eq:coverage_score} reduces to optimizing the sum of objectives in each relevant subset independently. In \cref{fig:ab-partition}, we compare this strategy directly to \ourmethod{} on three optimization tasks. 
We plot the best coverage score obtained by $K$ independent runs of \turbo{}/\lolbo{} on the fixed oracle $K$-partitioning of the objectives by combining their solutions into a single set. \ourmethod{} achieves the same average best coverage scores with minimal loss in optimization efficiency \textit{despite having to discover an efficient partitioning during optimization.} 
See \cref{sec:ablation2} for additional ablation studies.

\vspace{-1ex}
\section{Related Works}\label{sec: related_works}
\vspace{-1ex}

The coverage optimization problem setting considered in this paper has not been previously explored in the Bayesian optimization literature. 
However, similar problem settings~\citep{icml25-r1-1,icml25-r1-2,icml25-r1-3} have been studied in the context of \textit{gradient}-based multi-objective optimization.
For instance, \citet{icml25-r1-1} demonstrated that the coverage problem generalizes various clustering methods, including k-means clustering, and proposed an algorithm that extends k-means++~\citep{kmeansplus} and Lloyd’s algorithm~\citep{lloyd-algo} to efficiently solve the coverage problem using gradient descent. 
\citet{icml25-r1-2} introduced Many-objective Multi-solution Transport (MosT), a novel framework that scales multi-objective gradient-based optimization to a large number of objectives. 
MosT uses an optimal transport to establish a balanced assignment between objectives and solutions, allowing each solution to focus on a specific subset of objectives and ensuring that the collective set of solutions covers all objectives. 
\citet{icml25-r1-3} introduced a novel Tchebycheff set (TCH-Set) scalarization approach for covering multiple conflicting objectives using a small set of collaborative solutions. 
They further propose a smooth Tchebycheff set (STCH-Set) scalarization method to handle non-smoothness in TCH-Set scalarization. 
The reliance on gradients makes these previous approaches inapplicable to the black-box setting.

For the black-box setting, \citet{cluso} was the first to consider the coverage problem.
They propose an analogous formalization of the coverage problem in the context of multi-objective black-box optimization, along with a clustering-based swarm optimization algorithm (\cluso{}) designed to solve it.
In \cref{sec:experiments}, we provide empirical results demonstrating that our Bayesian optimization approach (\ourmethod{}) consistently outperforms \cluso{} by a large margin across all tasks considered. 

Lastly, we note that \citet{r1-cite1} proposed a method with the same name as the ECI we defined in \cref{sec: methods}, which was later extended in~\citep{r1-cite2}.
The resemblance is only in the name, as their work addresses a fundamentally different notion of coverage.
Mainly, they attempt to cover the \textit{input space} by identifying a diverse set of feasible solutions that span a subspace defined by known threshold constraints on each objective.
In contrast, we attempt to ``cover'' the different \textit{objectives}.

\vspace{-1ex}
\section{Conclusions}
By bridging the gap between traditional multi-objective Bayesian optimization (BO) and practical coverage requirements, \ourmethod{} offers an effective approach to tackle critical problems in drug design and beyond. 
This framework extends the reach of Bayesian optimization into new domains, providing a robust solution to the challenges posed by extreme trade-offs and the need for specialized, collaborative solutions. See \cref{sec:limits} for additional discussion and limitations of \ourmethod{}.

\newpage
\begin{ack}
N. Maus was supported by the National Science Foundation Graduate Research Fellowship; 
K. Kim was supported by a gift from AWS AI to Penn Engineering's ASSET Center for Trustworthy AI;  
J. R. Gardner was supported by NSF awards IIS-2145644 and DBI-2400135;
C. de la Fuente-Nunez was supported by NIH grant R35GM138201 and Defense Threat Reduction Agency grants HDTRA11810041, HDTRA1-21-1-0014, and HDTRA1-23-1-0001. 
\end{ack}

\bibliographystyle{unsrtnat}
\bibliography{references}

\newpage
\section*{NeurIPS Paper Checklist}

\begin{enumerate}

\item {\bf Claims}
    \item[] Question: Do the main claims made in the abstract and introduction accurately reflect the paper's contributions and scope?
    \item[] Answer: \answerYes{} 
    \item[] Justification: All stated claims are backed-up with results in \cref{sec:experiments} and the stated focus/scope of the paper accurately reflects what is discussed throughout the rest of the paper. 
    \item[] Guidelines:
    \begin{itemize}
        \item The answer NA means that the abstract and introduction do not include the claims made in the paper.
        \item The abstract and/or introduction should clearly state the claims made, including the contributions made in the paper and important assumptions and limitations. A No or NA answer to this question will not be perceived well by the reviewers. 
        \item The claims made should match theoretical and experimental results, and reflect how much the results can be expected to generalize to other settings. 
        \item It is fine to include aspirational goals as motivation as long as it is clear that these goals are not attained by the paper. 
    \end{itemize}
\item {\bf Limitations}
    \item[] Question: Does the paper discuss the limitations of the work performed by the authors?
    \item[] Answer: \answerYes{} 
    \item[] Justification: See \cref{sec:limits} for discussion of limitations.  
    \item[] Guidelines:
    \begin{itemize}
        \item The answer NA means that the paper has no limitation while the answer No means that the paper has limitations, but those are not discussed in the paper. 
        \item The authors are encouraged to create a separate "Limitations" section in their paper.
        \item The paper should point out any strong assumptions and how robust the results are to violations of these assumptions (e.g., independence assumptions, noiseless settings, model well-specification, asymptotic approximations only holding locally). The authors should reflect on how these assumptions might be violated in practice and what the implications would be.
        \item The authors should reflect on the scope of the claims made, e.g., if the approach was only tested on a few datasets or with a few runs. In general, empirical results often depend on implicit assumptions, which should be articulated.
        \item The authors should reflect on the factors that influence the performance of the approach. For example, a facial recognition algorithm may perform poorly when image resolution is low or images are taken in low lighting. Or a speech-to-text system might not be used reliably to provide closed captions for online lectures because it fails to handle technical jargon.
        \item The authors should discuss the computational efficiency of the proposed algorithms and how they scale with dataset size.
        \item If applicable, the authors should discuss possible limitations of their approach to address problems of privacy and fairness.
        \item While the authors might fear that complete honesty about limitations might be used by reviewers as grounds for rejection, a worse outcome might be that reviewers discover limitations that aren't acknowledged in the paper. The authors should use their best judgment and recognize that individual actions in favor of transparency play an important role in developing norms that preserve the integrity of the community. Reviewers will be specifically instructed to not penalize honesty concerning limitations.
    \end{itemize}
\item {\bf Theory assumptions and proofs}
    \item[] Question: For each theoretical result, does the paper provide the full set of assumptions and a complete (and correct) proof?
    \item[] Answer: \answerYes{} 
    \item[] Justification: All theoretical claims can be found in \cref{sec:greedy}. All sets of assumptions and full proofs of each claim are provided in \cref{sec:greedy}, \cref{sec: nphard}, and \cref{sec: greedyproof}. As far as we are aware, each proof is complete and correct. 
    \item[] Guidelines:
    \begin{itemize}
        \item The answer NA means that the paper does not include theoretical results. 
        \item All the theorems, formulas, and proofs in the paper should be numbered and cross-referenced.
        \item All assumptions should be clearly stated or referenced in the statement of any theorems.
        \item The proofs can either appear in the main paper or the supplemental material, but if they appear in the supplemental material, the authors are encouraged to provide a short proof sketch to provide intuition. 
        \item Inversely, any informal proof provided in the core of the paper should be complemented by formal proofs provided in appendix or supplemental material.
        \item Theorems and Lemmas that the proof relies upon should be properly referenced. 
    \end{itemize}

    \item {\bf Experimental result reproducibility}
    \item[] Question: Does the paper fully disclose all the information needed to reproduce the main experimental results of the paper to the extent that it affects the main claims and/or conclusions of the paper (regardless of whether the code and data are provided or not)?
    \item[] Answer: \answerYes{} 
    \item[] Justification: We provide detailed explanation of how our method works in \cref{sec: methods} and all additional required details to reproduce results in \cref{sec:experiments} and \cref{sec:detials}. 
    Additionally, we provide a link to a public GitHub repository containing the source code used to run our method and produce results provided in this paper.
    This GitHub repository contains organized code that will allow any reader to run \ourmethod{} on all tasks in this paper. Additionally, the README in the repository provides detailed instructions to make setting up the proper environment and running the code easy for users.
    \item[] Guidelines:
    \begin{itemize}
        \item The answer NA means that the paper does not include experiments.
        \item If the paper includes experiments, a No answer to this question will not be perceived well by the reviewers: Making the paper reproducible is important, regardless of whether the code and data are provided or not.
        \item If the contribution is a dataset and/or model, the authors should describe the steps taken to make their results reproducible or verifiable. 
        \item Depending on the contribution, reproducibility can be accomplished in various ways. For example, if the contribution is a novel architecture, describing the architecture fully might suffice, or if the contribution is a specific model and empirical evaluation, it may be necessary to either make it possible for others to replicate the model with the same dataset, or provide access to the model. In general. releasing code and data is often one good way to accomplish this, but reproducibility can also be provided via detailed instructions for how to replicate the results, access to a hosted model (e.g., in the case of a large language model), releasing of a model checkpoint, or other means that are appropriate to the research performed.
        \item While NeurIPS does not require releasing code, the conference does require all submissions to provide some reasonable avenue for reproducibility, which may depend on the nature of the contribution. For example
        \begin{enumerate}
            \item If the contribution is primarily a new algorithm, the paper should make it clear how to reproduce that algorithm.
            \item If the contribution is primarily a new model architecture, the paper should describe the architecture clearly and fully.
            \item If the contribution is a new model (e.g., a large language model), then there should either be a way to access this model for reproducing the results or a way to reproduce the model (e.g., with an open-source dataset or instructions for how to construct the dataset).
            \item We recognize that reproducibility may be tricky in some cases, in which case authors are welcome to describe the particular way they provide for reproducibility. In the case of closed-source models, it may be that access to the model is limited in some way (e.g., to registered users), but it should be possible for other researchers to have some path to reproducing or verifying the results.
        \end{enumerate}
    \end{itemize}

\item {\bf Open access to data and code}
    \item[] Question: Does the paper provide open access to the data and code, with sufficient instructions to faithfully reproduce the main experimental results, as described in supplemental material?
    \item[] Answer: \answerYes{} 
    \item[] Justification: We provide a link to a public GitHub repository containing the source code used to run our method and produce results provided in this paper.
    This GitHub repository contains organized code that will allow any reader to run \ourmethod{} on all tasks in this paper. Additionally, the README in the repository provides detailed instructions to make setting up the proper environment and running the code easy for users. 
    \item[] Guidelines:
    \begin{itemize}
        \item The answer NA means that paper does not include experiments requiring code.
        \item Please see the NeurIPS code and data submission guidelines (\url{https://nips.cc/public/guides/CodeSubmissionPolicy}) for more details.
        \item While we encourage the release of code and data, we understand that this might not be possible, so “No” is an acceptable answer. Papers cannot be rejected simply for not including code, unless this is central to the contribution (e.g., for a new open-source benchmark).
        \item The instructions should contain the exact command and environment needed to run to reproduce the results. See the NeurIPS code and data submission guidelines (\url{https://nips.cc/public/guides/CodeSubmissionPolicy}) for more details.
        \item The authors should provide instructions on data access and preparation, including how to access the raw data, preprocessed data, intermediate data, and generated data, etc.
        \item The authors should provide scripts to reproduce all experimental results for the new proposed method and baselines. If only a subset of experiments are reproducible, they should state which ones are omitted from the script and why.
        \item At submission time, to preserve anonymity, the authors should release anonymized versions (if applicable).
        \item Providing as much information as possible in supplemental material (appended to the paper) is recommended, but including URLs to data and code is permitted.
    \end{itemize}

\item {\bf Experimental setting/details}
    \item[] Question: Does the paper specify all the training and test details (e.g., data splits, hyperparameters, how they were chosen, type of optimizer, etc.) necessary to understand the results?
    \item[] Answer: \answerYes{} 
    \item[] Justification: All chosen hyper-parameters and implementation details are stated in \autoref{sec:experiments} and \cref{sec:detials}. 
    \item[] Guidelines:
    \begin{itemize}
        \item The answer NA means that the paper does not include experiments.
        \item The experimental setting should be presented in the core of the paper to a level of detail that is necessary to appreciate the results and make sense of them.
        \item The full details can be provided either with the code, in appendix, or as supplemental material.
    \end{itemize}

\item {\bf Experiment statistical significance}
    \item[] Question: Does the paper report error bars suitably and correctly defined or other appropriate information about the statistical significance of the experiments?
    \item[] Answer: \answerYes{} 
    \item[] Justification: On all plots, we plot the mean taken over $20$ random runs and include error bars to show the standard error over the runs. 
    \item[] Guidelines:
    \begin{itemize}
        \item The answer NA means that the paper does not include experiments.
        \item The authors should answer "Yes" if the results are accompanied by error bars, confidence intervals, or statistical significance tests, at least for the experiments that support the main claims of the paper.
        \item The factors of variability that the error bars are capturing should be clearly stated (for example, train/test split, initialization, random drawing of some parameter, or overall run with given experimental conditions).
        \item The method for calculating the error bars should be explained (closed form formula, call to a library function, bootstrap, etc.)
        \item The assumptions made should be given (e.g., Normally distributed errors).
        \item It should be clear whether the error bar is the standard deviation or the standard error of the mean.
        \item It is OK to report 1-sigma error bars, but one should state it. The authors should preferably report a 2-sigma error bar than state that they have a 96\% CI, if the hypothesis of Normality of errors is not verified.
        \item For asymmetric distributions, the authors should be careful not to show in tables or figures symmetric error bars that would yield results that are out of range (e.g. negative error rates).
        \item If error bars are reported in tables or plots, The authors should explain in the text how they were calculated and reference the corresponding figures or tables in the text.
    \end{itemize}

\item {\bf Experiments compute resources}
    \item[] Question: For each experiment, does the paper provide sufficient information on the computer resources (type of compute workers, memory, time of execution) needed to reproduce the experiments?
    \item[] Answer: \answerYes{} 
    \item[] Justification: All compute details are provided in \cref{sec:compute}.
    \item[] Guidelines:
    \begin{itemize}
        \item The answer NA means that the paper does not include experiments.
        \item The paper should indicate the type of compute workers CPU or GPU, internal cluster, or cloud provider, including relevant memory and storage.
        \item The paper should provide the amount of compute required for each of the individual experimental runs as well as estimate the total compute. 
        \item The paper should disclose whether the full research project required more compute than the experiments reported in the paper (e.g., preliminary or failed experiments that didn't make it into the paper). 
    \end{itemize}
    
\item {\bf Code of ethics}
    \item[] Question: Does the research conducted in the paper conform, in every respect, with the NeurIPS Code of Ethics \url{https://neurips.cc/public/EthicsGuidelines}?
    \item[] Answer: \answerYes{} 
    \item[] Justification: We have made sure to adhere to the NeurIPS Code of Ethics in all aspects of our research.
    \item[] Guidelines:
    \begin{itemize}
        \item The answer NA means that the authors have not reviewed the NeurIPS Code of Ethics.
        \item If the authors answer No, they should explain the special circumstances that require a deviation from the Code of Ethics.
        \item The authors should make sure to preserve anonymity (e.g., if there is a special consideration due to laws or regulations in their jurisdiction).
    \end{itemize}

\item {\bf Broader impacts}
    \item[] Question: Does the paper discuss both potential positive societal impacts and negative societal impacts of the work performed?
    \item[] Answer: \answerYes{} 
    \item[] Justification: See discussion of broader societal impacts in \cref{sec:impacts}. 
    \item[] Guidelines:
    \begin{itemize}
        \item The answer NA means that there is no societal impact of the work performed.
        \item If the authors answer NA or No, they should explain why their work has no societal impact or why the paper does not address societal impact.
        \item Examples of negative societal impacts include potential malicious or unintended uses (e.g., disinformation, generating fake profiles, surveillance), fairness considerations (e.g., deployment of technologies that could make decisions that unfairly impact specific groups), privacy considerations, and security considerations.
        \item The conference expects that many papers will be foundational research and not tied to particular applications, let alone deployments. However, if there is a direct path to any negative applications, the authors should point it out. For example, it is legitimate to point out that an improvement in the quality of generative models could be used to generate deepfakes for disinformation. On the other hand, it is not needed to point out that a generic algorithm for optimizing neural networks could enable people to train models that generate Deepfakes faster.
        \item The authors should consider possible harms that could arise when the technology is being used as intended and functioning correctly, harms that could arise when the technology is being used as intended but gives incorrect results, and harms following from (intentional or unintentional) misuse of the technology.
        \item If there are negative societal impacts, the authors could also discuss possible mitigation strategies (e.g., gated release of models, providing defenses in addition to attacks, mechanisms for monitoring misuse, mechanisms to monitor how a system learns from feedback over time, improving the efficiency and accessibility of ML).
    \end{itemize}
    
\item {\bf Safeguards}
    \item[] Question: Does the paper describe safeguards that have been put in place for responsible release of data or models that have a high risk for misuse (e.g., pretrained language models, image generators, or scraped datasets)?
    \item[] Answer: \answerNA{} 
    \item[] Justification: The paper does not release new data or models with potential societal concerns.
    \item[] Guidelines:
    \begin{itemize}
        \item The answer NA means that the paper poses no such risks.
        \item Released models that have a high risk for misuse or dual-use should be released with necessary safeguards to allow for controlled use of the model, for example by requiring that users adhere to usage guidelines or restrictions to access the model or implementing safety filters. 
        \item Datasets that have been scraped from the Internet could pose safety risks. The authors should describe how they avoided releasing unsafe images.
        \item We recognize that providing effective safeguards is challenging, and many papers do not require this, but we encourage authors to take this into account and make a best faith effort.
    \end{itemize}

\item {\bf Licenses for existing assets}
    \item[] Question: Are the creators or original owners of assets (e.g., code, data, models), used in the paper, properly credited and are the license and terms of use explicitly mentioned and properly respected?
    \item[] Answer: \answerYes{} 
    \item[] Justification: Creators of all assets used to produce all results in this paper are cited in \cref{sec:experiments}. All assets used are open source software or models. 
    \item[] Guidelines:
    \begin{itemize}
        \item The answer NA means that the paper does not use existing assets.
        \item The authors should cite the original paper that produced the code package or dataset.
        \item The authors should state which version of the asset is used and, if possible, include a URL.
        \item The name of the license (e.g., CC-BY 4.0) should be included for each asset.
        \item For scraped data from a particular source (e.g., website), the copyright and terms of service of that source should be provided.
        \item If assets are released, the license, copyright information, and terms of use in the package should be provided. For popular datasets, \url{paperswithcode.com/datasets} has curated licenses for some datasets. Their licensing guide can help determine the license of a dataset.
        \item For existing datasets that are re-packaged, both the original license and the license of the derived asset (if it has changed) should be provided.
        \item If this information is not available online, the authors are encouraged to reach out to the asset's creators.
    \end{itemize}

\item {\bf New assets}
    \item[] Question: Are new assets introduced in the paper well documented and is the documentation provided alongside the assets?
    \item[] Answer: \answerNA{} 
    \item[] Justification: This paper does not introduce any new assets.
    \item[] Guidelines:
    \begin{itemize}
        \item The answer NA means that the paper does not release new assets.
        \item Researchers should communicate the details of the dataset/code/model as part of their submissions via structured templates. This includes details about training, license, limitations, etc. 
        \item The paper should discuss whether and how consent was obtained from people whose asset is used.
        \item At submission time, remember to anonymize your assets (if applicable). You can either create an anonymized URL or include an anonymized zip file.
    \end{itemize}

\item {\bf Crowdsourcing and research with human subjects}
    \item[] Question: For crowdsourcing experiments and research with human subjects, does the paper include the full text of instructions given to participants and screenshots, if applicable, as well as details about compensation (if any)? 
    \item[] Answer: \answerNA{} 
    \item[] Justification: The work does not involve any human participants.
    \item[] Guidelines:
    \begin{itemize}
        \item The answer NA means that the paper does not involve crowdsourcing nor research with human subjects.
        \item Including this information in the supplemental material is fine, but if the main contribution of the paper involves human subjects, then as much detail as possible should be included in the main paper. 
        \item According to the NeurIPS Code of Ethics, workers involved in data collection, curation, or other labor should be paid at least the minimum wage in the country of the data collector. 
    \end{itemize}

\item {\bf Institutional review board (IRB) approvals or equivalent for research with human subjects}
    \item[] Question: Does the paper describe potential risks incurred by study participants, whether such risks were disclosed to the subjects, and whether Institutional Review Board (IRB) approvals (or an equivalent approval/review based on the requirements of your country or institution) were obtained?
    \item[] Answer: \answerNA{} 
    \item[] Justification: The work does not involve any living participants.
    \item[] Guidelines:
    \begin{itemize}
        \item The answer NA means that the paper does not involve crowdsourcing nor research with human subjects.
        \item Depending on the country in which research is conducted, IRB approval (or equivalent) may be required for any human subjects research. If you obtained IRB approval, you should clearly state this in the paper. 
        \item We recognize that the procedures for this may vary significantly between institutions and locations, and we expect authors to adhere to the NeurIPS Code of Ethics and the guidelines for their institution. 
        \item For initial submissions, do not include any information that would break anonymity (if applicable), such as the institution conducting the review.
    \end{itemize}

\item {\bf Declaration of LLM usage}
    \item[] Question: Does the paper describe the usage of LLMs if it is an important, original, or non-standard component of the core methods in this research? Note that if the LLM is used only for writing, editing, or formatting purposes and does not impact the core methodology, scientific rigorousness, or originality of the research, declaration is not required.
    \item[] Answer: \answerNA{} 
    \item[] Justification: This work does not use LLMs in any way. 
    \item[] Guidelines:
    \begin{itemize}
        \item The answer NA means that the core method development in this research does not involve LLMs as any important, original, or non-standard components.
        \item Please refer to our LLM policy (\url{https://neurips.cc/Conferences/2025/LLM}) for what should or should not be described.
    \end{itemize}

\end{enumerate}

\newpage
\appendix
\tableofcontents
\newpage
\counterwithin{figure}{section} 
\renewcommand{\thefigure}{\thesection.\arabic{figure}}
\counterwithin{table}{section}
\renewcommand{\thetable}{\thesection.\arabic{table}}
\section{Obtaining \textit{In Vitro} Minimal Inhibitory Concentration (MIC) Data }
\label{sec:marcelo-lab-methods}
In this section, we provide the methods used to produce all \textit{in vitro} minimal inhibitory concentration (MIC) values reported in this paper. 

\vspace{-1ex}
\paragraph{Peptide synthesis.} 
All peptides were synthesized by solid-phase peptide synthesis using the Fmoc strategy and purchased from AAPPTec.

\vspace{-1ex}
\paragraph{Bacterial strains and growth conditions used in the experiments. }
The following Gram-negative bacteria were used in our study: 
Acinetobacter baumannii ATCC 19606, Escherichia coli ATCC 11775, E. coli AIC221 (E. coli MG1655 phn$E_2$::FRT), E. coli AIC222 (E. coli MG1655 pmrA53 phn$E_2$::FRT (colistin resistant)), Klebsiella pneumoniae ATCC 13883, Pseudomonas aeruginosa PAO1, and P. aeruginosa PA14.
The following Gram-positive bacteria were also used in our study: Staphylococcus aureus ATCC 12600, S. aureus ATCC BAA-1556 (methicillin-resistant strain), Enetrococcus faecalis ATCC 700802 (vancomycin-resistant strain) and E. faecium ATCC 700221 (vancomycin-resistant strain). Bacteria were grown from frozen stocks and plated on Luria-Bertani (LB) or Pseudomonas isolation agar plates (P. aeruginosa strains) and incubated overnight at $\SI{37}{\degreeCelsius}$. 
After the incubation period, a single colony was transferred to $\SI{6}{\milli\liter}$ of LB medium, and cultures were incubated overnight ($\SI{16}{\hour}$) at $\SI{37}{\degreeCelsius}$. The following day, an inoculum was prepared by diluting the overnight cultures $1:100$ in $\SI{6}{\milli\liter}$ of the respective media and incubating them at $\SI{37}{\degreeCelsius}$ until bacteria reached logarithmic phase ($OD_{600} = 0.3 - 0.5$).

\vspace{-1ex}
\paragraph{Antibacterial assays.} 
The \textit{in vitro} antimicrobial activity of the peptides was assessed by using the broth microdilution assay \citep{labmethods}. Minimal inhibitory concentration (MIC) values of the peptides were determined with an initial inoculum of $\SI{2e6}{\cells\per\milli\liter}$ in LB in microtiter $\num{96}$-well flat-bottom transparent plates. Aqueous solutions of the peptides were added to the plate at concentrations ranging from $0.0625$ to $64$ $\si{\micro\mole\per\liter}$. 
The lowest concentration of peptide that inhibited 100 percent of the visible growth of bacteria was established as the MIC value in an experiment of $\SI{20}{\hour}$ of exposure at $\SI{37}{\degreeCelsius}$. The optical density of the plates was measured at $\SI{600}{\nano\meter}$ using a spectrophotometer. All assays were done as three biological replicates.

\label{appendix}

\newpage
\section{Additional Results}
\label{sec:more-results}
In this section, we provide all additional empirical results not included in the main text. 

\subsection{Additional Covering Sets of Solutions Found by \ourmethod{}}
In this section, we provide additional examples of covering sets of solutions found by \ourmethod{} for various tasks from \cref{sec:experiments}.

\begin{figure}[!ht]
\vskip 0.2in
\begin{center}
\centerline{\includegraphics[width=\columnwidth/2]{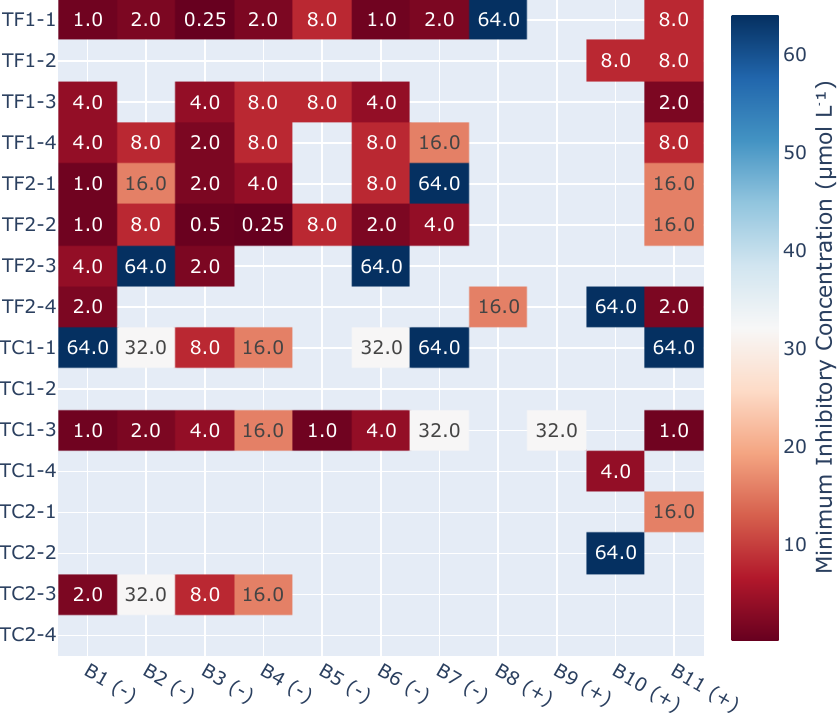}}
\caption{
\textit{In vitro} results for the two best ``template free" (TF1, TF2) and two best ``template constrained" (TC1, TC2) runs of \ourmethod{} for the peptide design task. 
Here, ``best" means runs that achieved highest coverage scores according to the APEX 1.1 model. 
Each row is a single peptide found by a run of \ourmethod{}. 
Row TF$i$-$j$ indicates template free run $i$, peptide $j$. Similarly, TC$i$-$j$ indicates template constrained run $i$, peptide $j$.
Columns are the \textit{in vitro} MICs for each target pathogenic bacteria B1$, \ldots,$ B11 listed in \cref{tab:bacteria}. (-) and (+) indicate Gram negative and Gram positive respectively. TF1 and TC1 correspond to the single runs of \ourmethod{} shown in \cref{tab:template-free-result} and \cref{tab:template-constrained-result} respectively. Methods used to obtain \textit{in vitro} MICs are provided in \cref{sec:marcelo-lab-methods}. 
}
\label{fig:lab-result-full}
\end{center}
\vskip -0.2in
\end{figure}

In \cref{fig:lab-result-full}, we provide \textit{In vitro} results for the two best ``template free" and two best ``template constrained" runs of \ourmethod{} for the peptide design task. 
Here, ``best" means runs that achieved highest coverage scores according to the APEX 1.1 model. 
\cref{fig:lab-result-full} provides \textit{in vitro} MICs for each of the $K=4$ peptides found by each of these runs of \ourmethod{}, for each of the $11$ target pathogenic bacteria. Methods used to obtain \textit{in vitro} MICs are provided in \cref{sec:marcelo-lab-methods}.

\begin{table}[!ht]
\caption{
The best set of $K=4$ peptide sequences found by one run of \ourmethod{} for the ``template free" peptide design task described in \cref{sec: tasks}. 
For each of the four sequences, we provide the MIC according to the APEX model for each of the $11$ target pathogenic bacteria.
The target pathogenic bacteria B1$, \ldots,$ B11 are listed in \cref{tab:bacteria}. (-) and (+) indicate Gram negative and Gram positive pathogenic bacteria respectively. 
The best/lowest MIC achieved for each pathogenic bacteria is in bold in each column.
See row ``TC1" in \cref{fig:lab-coverage-only}, and \cref{fig:lab-result-full} for \textit{in vitro} MICs for this set of $K=4$ peptides.}
\label{tab:template-constrained-result}
\centering
\resizebox{\columnwidth}{!}{
    \begin{tabular}{lccccccccccc}
        \toprule
        Peptide Amino Acid Sequence & B1(-) & B2(-) & B3(-) & B4(-) & B5(-) & B6(-) & B7(-) & B8(+) & B9(+) & B10(+) & B11(+) \\
        \midrule
        \texttt{KKLKIIRLLFK}  & 18.594 & 17.067 & 4.278 & 5.352 & \bf{13.460} & 50.442 & 24.543 & 456.831 & 431.276 & 441.292 & 20.305 \\
        \texttt{WAIRGLKLATWLSLNNKF} & 6.771 & 20.358 & 14.644 & 10.477 & 65.172 & 97.404 & 59.195 & \bf{19.846} & \bf{33.459} & 237.697 & 7.708 \\
        \texttt{RWARNLVRYVKWLKKLKKVI}  & \bf{2.171} & \bf{4.589} & \bf{2.641} & \bf{3.073} & 54.400 & \bf{11.444} & \bf{19.150} & 75.588 & 89.977 & 413.386 & \bf{2.913} \\
        \texttt{HWITIAFFRLSISLKI}  & 225.260 & 346.589 & 56.583 & 58.253 & 458.963 & 475.616 & 538.352 & 293.852 & 338.047 & \bf{34.230} & 22.153\\
        \bottomrule
    \end{tabular}
}
\end{table}

In \cref{tab:template-constrained-result}, we provide an example of a covering set of peptides found by \ourmethod{} for the ``template constrained" variation of the peptide design task.

In \cref{fig:church_example}, we provide the original HDR image (leftmost panel), and the $K=4$ images produced using the $K=4$ solutions found by a single run of \ourmethod{} for the church image variation of the image tone mapping task. 
An analogous result for the desk image variation of the image tone mapping task can be found in the main text in \cref{fig:desk-only}. 
A notable apparent limitation of the existing IAA/IQA metrics we used for the image tone mapping tasks is that they all favored monochromatic images for both the church and desk images. As such, the tone-mapped images obtained appear less colorful than their hand-tuned counterparts reported in prior work.

In \cref{tab:rano-result}, we provide an example of a covering set of $K=3$ molecules found by \ourmethod{} for the molecule design task. As mentioned in \cref{sec:experiments}, these three molecules effectively cover the $T=6$ objectives, as evidenced by the presence of all $T=6$ target elements in one of the $K=3$ molecules designed by \ourmethod{}. 

In \cref{fig:rover8} and \cref{fig:rover12}, we provide examples of a covering set of trajectories found for the $T=8$ and $T=12$ variations of the rover task respectively. In each figure, a panel is shown for each of the $T$ obstacles courses, with obstacles colored in red. 
The required starting point for the rover is a green point in the bottom left of each panel. The ``goal" end point that the rover aims to reach without hitting any obstacles is the red point in the top right of each panel. 
Each panel also shows the best among the $K$ trajectories found by a run of \ourmethod{} for navigating each obstacle course. 
An analogous plot for the $T=4$ variation of the rover optimization task is provided in the main text in \cref{fig:rover4-only}.

\begin{figure}[!ht]
\vskip 0.2in
\begin{center}
\centerline{\includegraphics[width=0.8\columnwidth]{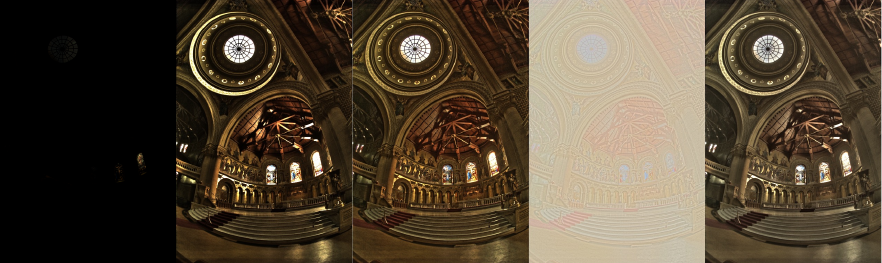}}
\caption{
\textbf{(Leftmost Panel)} Image obtained by naively compressing the dynamic range of the HDR church image in the ``image tone mapping" task. \textbf{(Four Rightmost Panels)} Images obtained by applying tone mapping to the church image using the best covering set of $K=4$ solutions found by a single run of \ourmethod{}. 
}
\label{fig:church_example}
\end{center}
\vskip -0.2in
\end{figure}
\begin{table*}[!ht]
\caption{The best set of $K=3$ molecules found by one run of \ourmethod{} for the Ranolazine MPO multiple element molecule design task described in \cref{sec: tasks}. For each of the three molecules, we provide the objective value obtained for each of the $T=6$ objectives. The best/highest objective value is in bold in each column. Each objective aims to add a different target element to Ranolazine (F, Cl, Br, Se, S, and P). The $T=6$ target elements are in bold in the SMILES string \citep{SMILES} representation of each molecule. }
\label{tab:rano-result}
\centering
\resizebox{\columnwidth}{!}{
    \begin{tabular}{lcccccc}
        \toprule
        Molecule (SMILES String) & Obj 1 (add F) & Obj 2 (add Cl) & Obj 2 (add Br) & Obj 2 (add Se) & Obj 2 (add S) & Obj 2 (add P) \\
        \midrule
        \midrule
\texttt{CC=C(C)C(OC(=O)C(O)CCCCCCC(=O)O)}\\\texttt{=CC=CCCCCCC[\textbf{Se}]CC(=O)NC1=CC=CC=C1C}  & 0.8038 & 0.8038 & 0.8038 & \bf{0.9108} & 0.8038 & 0.8038\\
\midrule
\texttt{CC=C(C)C(OC(=O)CCCCCCC(O)C(=\textbf{S})\textbf{Cl})}\\\texttt{=CC=CCOCCCCC(O)CC(=O)NC1=CC=CC=C1C}  & 0.8043 & \bf{0.9114} & 0.8043 & 0.8043 & \bf{0.9114} & 0.8043 \\
        \midrule
\texttt{CC=C(C)C(OC(=O)C(O)CCCCCCC(=O)C\textbf{Br})}\\\texttt{=CC=COC\textbf{P}CCCCN(C)CC(=O)[NH1]C1=CC=C(\textbf{F})C=C1C} & \bf{0.9097} & 0.8028 & \bf{0.9097} & 0.8028 & 0.8028 & \bf{0.9097} \\
        \bottomrule
    \end{tabular}
}
\end{table*}
\begin{figure}[!ht]
\vskip 0.2in
\begin{center}
\centerline{\includegraphics[width=0.8\columnwidth]{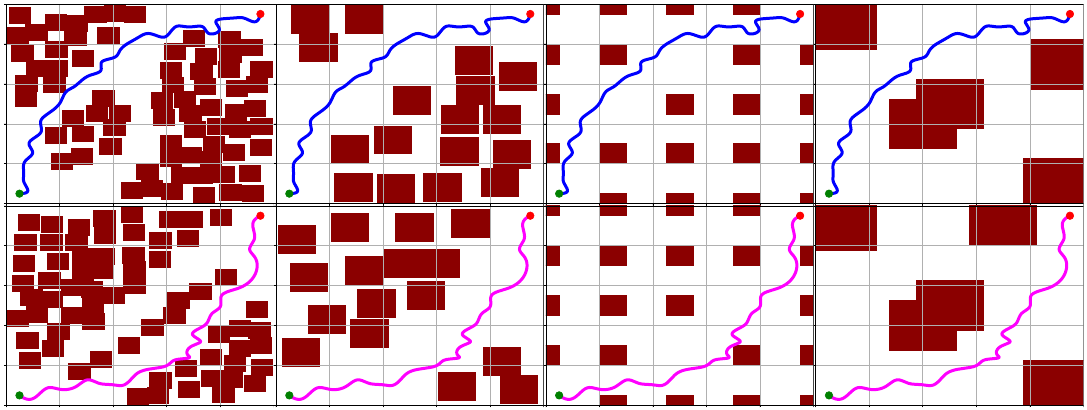}}
\caption{The 8 panels depict the 8 obstacle courses that the rover must navigate for the $T=8$ variation of the rover task, with obstacles colored in red.
The required starting point for the rover is a green point in the bottom left of each panel. The ``goal" end point that the rover aims to reach without hitting any obstacles is the red point in the top right of each panel. 
The line in each panel shows the best trajectory for navigating the obstacle course from among the $K=2$ covering trajectories found by a single run of \ourmethod{}.
The first trajectory in the covering set is shown in magenta and successfully navigates obstacle courses 5, 6, 7, and 8 \textbf{(Bottom Row)}. The second is shown in blue and successfully navigates obstacle courses 1, 2, 3, and 4 \textbf{(Top Row)}.} 
\label{fig:rover8}
\end{center}
\vskip -0.2in
\end{figure}
\begin{figure}[!ht]
\vskip 0.2in
\begin{center}
\centerline{\includegraphics[width=0.8\columnwidth]{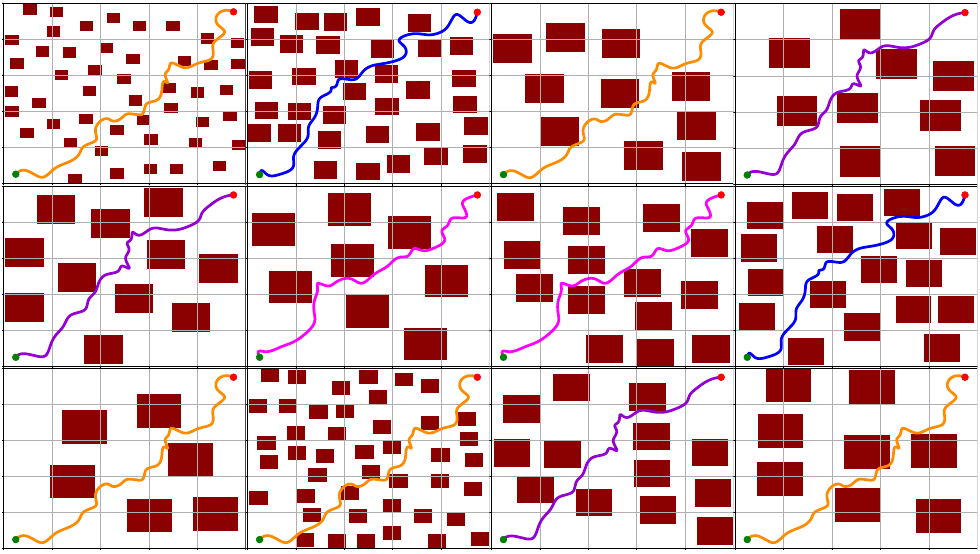}}
\caption{The 12 panels depict the 12 obstacle courses that the rover must navigate for the $T=12$ variation of the rover task, with obstacles colored in red. The required starting point for the rover is a green point in the bottom left of each panel. The ``goal" end point that the rover aims to reach without hitting any obstacles is the red point in the top right of each panel. 
The line in each panel shows the best trajectory for navigating each obstacle course from among the $K=4$ covering trajectories found by a run of \ourmethod{}.
The first trajectory in the covering set is shown in magenta and successfully navigates obstacle courses 6 and 7 \textbf{(Middle Row: Center Left and Center Right Panels)}. The second is shown in blue and successfully navigates obstacle courses 2 and 8 \textbf{(Top Row: Center Left Panel, Middle Row: Rightmost Panel)}. 
The third is shown in purple and successfully navigates obstacle courses 4, 5, and 11 \textbf{(Top Row: Rightmost Panel, Middle Row: Leftmost Panel, and Bottom Row: Center Right Panel)}.
The fourth trajectory is shown in orange and successfully navigates obstacle courses 1, 3, 9, 10, and 12 \textbf{(Top Row: Leftmost Panel and Center Right Panel, Bottom Row: Leftmost, Center Left, and Rightmost Panels)}.} 
\label{fig:rover12}
\end{center}
\vskip -0.2in
\end{figure}

\subsection{Additional Ablation Studies}
\label{sec:ablation2}
In this section, we provide additional ablation studies not included in the main text.  

\subsubsection{Ablation: Settings Where All Objectives are Highly Conflicting }
\ourmethod{} is designed for cases where some, but not all, of the objectives are highly conflicting, and this is the case for all of the tasks considered in \cref{sec:experiments}. The conflicting pairs of the objectives prevent a single solution from optimizing all objectives well. The fact that not \textit{all} pairs of objectives are completely conflicting, is the reason why it is possible to cover all T objectives with a small set of K solutions. If all pairs of the T objectives are highly conflicting, there is by definition no possible set of $K<T$ solutions such that all objectives are well optimized. In this case, it’s best to use T individually optimized solutions if your goal is to find at least one solution that well optimizes each objective. However, it is still interesting to consider the performance of \ourmethod{} in the setting of T objectives that are pairwise highly conflicting. To investigate this, we construct a new variation of the multi-objective rover task described in \cref{sec: tasks}. We design T=3 obstacle courses such that it is impossible for the rover to take any single path that avoids all obstacles in any pair of the obstacle courses. This results in T=3 objectives that are pairwise highly conflicting. We then run 20 replications of \ourmethod{} on this problem with K=2, asking \ourmethod{} to design K=2 solutions that cover the T=3 pairwise highly conflicting objectives. For this task, \ourmethod{} got an average coverage score of $2.932$, while T individually optimized solutions got an average coverage score of $12.899$. It is unsurprising that T individually optimized solutions achieved better coverage here since all three objectives are pairwise highly conflicting. However, this result demonstrates that \ourmethod{} is able to design sets of K=2 solutions that still achieve fairly high coverage scores. 
\begin{figure}[!ht]
\vskip 0.2in
\begin{center}
\centerline{\includegraphics[width=0.8\columnwidth]{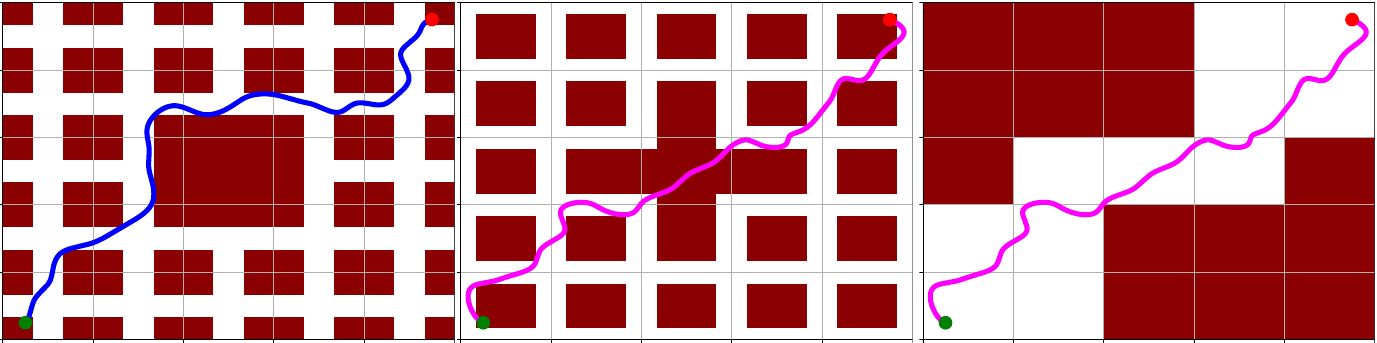}}
\caption{The three panels depict the T=3 obstacle courses that the rover must navigate for the ablation variation of the rover task where all T=3 objectives pairwise highly conflicting. Obstacles are colored in red. The three obstacle courses are designed such that any single path that successfully avoids all obstacles in one obstacle course, must hit some obstacles in both of the other obstacle courses. The required starting point for the rover is a green point in the bottom left of each panel. The ``goal" end point that the rover aims to reach without hitting any obstacles is the red point in the top right of each panel. The line in each panel shows the best trajectory for navigating the obstacle course from among the K=2 covering trajectories found by a single run of \ourmethod{}. The first trajectory in the covering set is shown in magenta and is the best trajectory for navigating obstacle courses 2 and 3 \textbf{(Middle Panel and Rightmost Panels)}. 
The second is shown in blue and successfully navigates obstacle course 1 \textbf{(Leftmost Panel)}.} 
\label{fig:rover3-ablation}
\end{center}
\vskip -0.2in
\end{figure}

\cref{fig:rover3-ablation} provides a diagram of the T=3 pairwise highly conflicting obstacle courses, and an example of one of the covering sets of K=2 trajectories found by a single run of \ourmethod{}. One optimized trajectory (shown in blue) is designed by \ourmethod{} such that it specializes to navigate the first obstacle course, navigating it without hitting any obstacles (Leftmost Panel). The other optimized trajectory (shown in magenta) is designed by \ourmethod{} such that it avoids all obstacles in the third obstacle course (Rightmost Panel), while also minimizing total amount of impact with the obstacles in the second obstacle course (Middle Panel). Since it is by-design impossible to avoid all obstacles in all T=3 obstacle courses with only K=2 solutions, \cref{fig:rover3-ablation} demonstrates that \ourmethod{} was able to successfully balance trade-offs among the objectives, designing a set of K=2 solutions that minimized the total amount of obstacle impact across the three obstacle courses.

\subsubsection{Ablation: \ourmethod{} Covering Set Size}
In this section, we ablate $K$, the user-specified hyperparameter that dictates of size of the set that \ourmethod{} designs to cover the $T$ objectives. For this ablation, we use the rover task with $T=4$ obstacle courses as defined in \cref{sec: tasks}. Note that in the main text we provide results comparing \ourmethod{} to baseline methods using $K=2$ for this task. In \cref{fig:k-ablation}, we provide results from running \ourmethod{} with each of $K=1, 2, 3,$ and $4$. With the ability to use more than $2$ solutions to cover the $T$ objectives ($K=3,4$), the optimization problem becomes easier and \ourmethod{} is able to converge more quickly. However, the loss in optimization efficiency inured by using the smaller covering set size of $K=2$, rather than a higher value of $K$, is marginal, highlighting \ourmethod{}'s ability of efficiency design smaller sets of high performing solutions. 

With only one solution ($K=1$), it is by definition not possible to cover all $T=4$ objectives since several pairs of the obstacle courses are specifically designed to be completely conflicting (no one trajectory can successfully avoid the obstacles in all $T=4$ obstacle courses). \ourmethod{} with $K=1$ therefore obtains a substantially lower final coverage score. 
\begin{figure}[!ht]
\vskip 0.2in
\begin{center}
\centerline{\includegraphics[width=0.8\columnwidth]{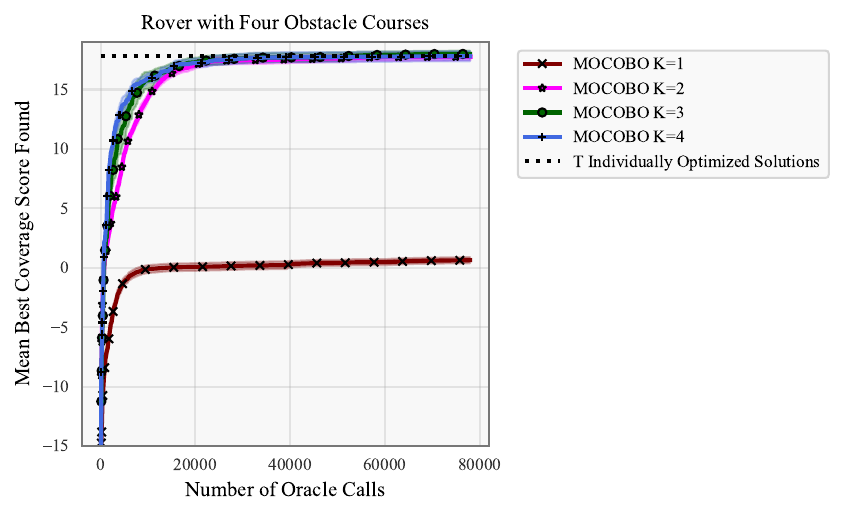}}
\caption{Ablating the hyperparameter $K$ used to run \ourmethod{} on the rover task with $T=4$ obstacle courses (see task definition in \cref{sec: tasks}).} 
\label{fig:k-ablation}
\end{center}
\vskip -0.2in
\end{figure}

\subsubsection{Ablation: \ourmethod{} Surrogate Model Quality}
In this section, we ablate the quality of the surrogate model used by \ourmethod{}. To run \ourmethod{} with surrogate models of varying quality, we vary the \ourmethod{} hyperparameter $m$: the number of inducing points used to define the approximate Gaussian process (GP) surrogate model (see surrogate model details in \cref{sec:detials}). It is well established in the literature that approximate GP models perform better with a larger number of inducing points, as the inducing point approximation used by the model is improved. However, there is an inherent trade-off as the computational cost of training the model increases with the number of inducing points. In practice, we therefore often to select the smallest possible value of $m$ (to maximize computational efficiency) such that we don't incur any significant performance degradation. 

For this ablation, we use the rover task with $T=4$ obstacle courses as defined in \cref{sec: tasks}. In \cref{fig:m-ablation}, we provide results from running \ourmethod{} with each of $m=4, 16, 64, 256, 1024,$ and $4096$. In all other experiments in this paper, we use $m=1024$. Results in \cref{fig:m-ablation} support our choice of $m=1024$ as results demonstrate that no significant performance improvement is gained by using the larger value of $m=4096$, and that performance starts to degrade with smaller values of $m \leq 256$. 
\begin{figure}[!ht]
\vskip 0.2in
\begin{center}
\centerline{\includegraphics[width=0.8\columnwidth]{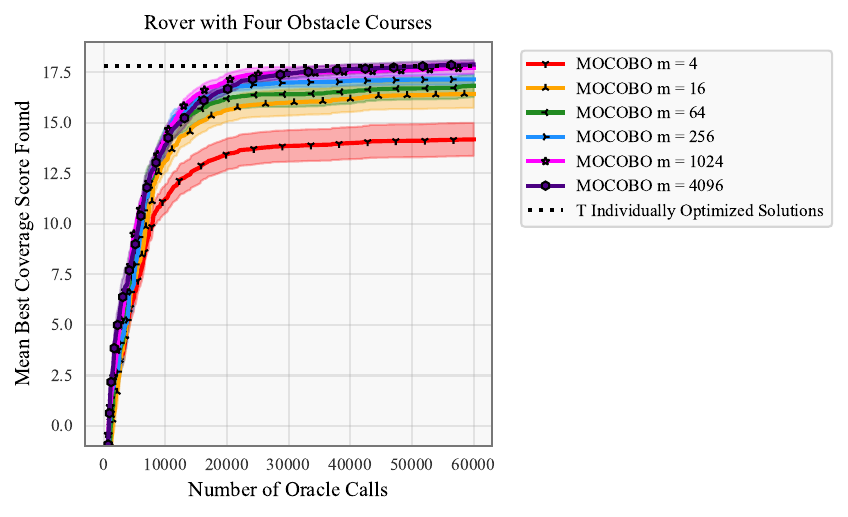}}
\caption{Ablating the \ourmethod{} hyperparameter $m$ (the number of inducing points used by the surrogate model) on the rover task with $T=4$ obstacle courses (see task definition in \cref{sec: tasks}). Lower values of $m$ correspond to lower surrogate model quality.} 
\label{fig:m-ablation}
\end{center}
\vskip -0.2in
\end{figure}

\subsubsection{Ablation: \ourmethod{} Batch Size}
In this section, we ablate the \ourmethod{} hyperparameter $q$, the acquisition batch size which dictates how many points are selected for evaluation from each trust region on each iteration of \ourmethod{}.
For this ablation, we use the rover task with $T=4$ obstacle courses as defined in \cref{sec: tasks}. In \cref{fig:q-ablation}, we provide results from running \ourmethod{} with each of $q=1, 2, 5, 10, 20,$ and $40$. In all other experiments in this paper, we use $q=20$. Results in \cref{fig:q-ablation} demonstrate the robustness \ourmethod{} to
changes in $q$, as there is little to no significant change in the performance of \ourmethod{} with different the values of $q$. 
\begin{figure}[!ht]
\vskip 0.2in
\begin{center}
\centerline{\includegraphics[width=0.8\columnwidth]{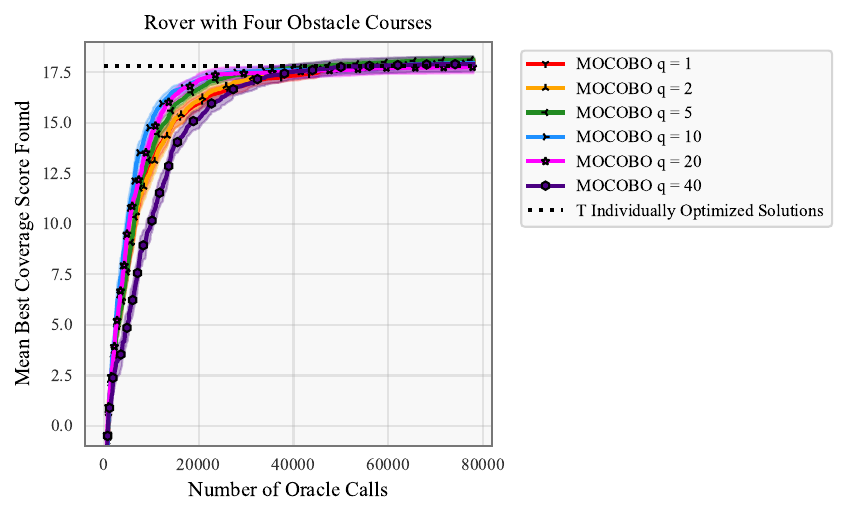}}
\caption{Ablating the \ourmethod{} hyperparameter $q$ (the number of points selected for evaluation during the acquisition step on each iteration) on the rover task with $T=4$ obstacle courses (see task definition in \cref{sec: tasks}).} 
\label{fig:q-ablation}
\end{center}
\vskip -0.2in
\end{figure}

\subsubsection{Ablation: \ourmethod{} Trust Regions}
In this section, we ablate the use of trust regions in \ourmethod{}. 
For this ablation, we use the rover task with $T=4$ obstacle courses as defined in \cref{sec: tasks}.
To ablate the use of trust regions, we compare to \ourmethod{} (as defined in \cref{sec: methods} with trust regions), to running \ourmethod{} without trust regions. 
Results in \cref{fig:eci-tr-ablation} demonstrate that \ourmethod{} performs significantly better with the use of trust regions. This result confirms that trust regions significantly improve performance in the high-dimensional settings we consider. 

\paragraph{Why trust regions} It has been well-established in the Bayesian optimization (BO) literature that standard BO (without trust regions or other high-dimensional adaptations) performs poorly in high dimensions. 
\citet{turbo} introduced trust regions as a principled way to improve performance of high-dimensional BO. Since then, trust regions have become a standard tool for any high-dimensional BO task. Since all of the tasks we consider in this paper are high-dimensional, this precisely why we adopt trust regions. We note that recent work has proposed alternative approaches to improve high-dimensional BO without trust regions (e.g., \cite{carlbo}). However, there remain concerns about the applicability of these approaches to structured domains (e.g., \cite{structured-bo-survey}). 
\begin{figure}[!ht]
\vskip 0.2in
\begin{center}
\centerline{\includegraphics[width=0.8\columnwidth]{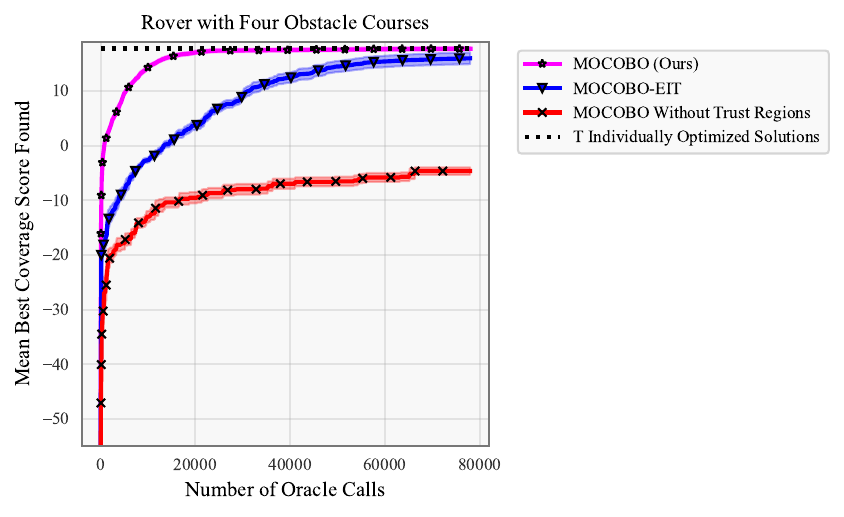}}
\caption{Ablation comparing \ourmethod{} as defined in \cref{sec: methods}, to two variations each removing a different component of \ourmethod{}. To ablate the use of trust regions in \ourmethod{}, we compare to \ourmethod{} without trust regions. To ablate our proposed ECI acquisition function, we additionally compare to \ourmethod{}-EIT, a variation of \ourmethod{} where we select candidates in acquisition by taking the maximum of standard EI for each of the $T$ objectives individually (EIT), rather than using ECI as described in \cref{sec: methods}. 
We provide optimization results for \ourmethod{} and these two variations on the rover task with $T=4$ obstacle courses (see task definition in \cref{sec: tasks}).
} 
\label{fig:eci-tr-ablation}
\end{center}
\vskip -0.2in
\end{figure}

\subsubsection{Ablation: \ourmethod{} Expected Coverage Improvement (ECI) Acquisition Function} 
In this section, we ablate our proposed expected coverage improvement (ECI) acquisition function.
For this ablation, we use the rover task with $T=4$ obstacle courses as defined in \cref{sec: tasks}.
To ablate the use of ECI, we compare to \ourmethod{} (as defined in \cref{sec: methods} with ECI), to \ourmethod{}-EIT: a variation of \ourmethod{} where on each iteration, we instead select points for evaluation by taking the maximum of the standard expected improvement (EI) acquisition function for each of the $T$ objectives individually. Results in \cref{fig:eci-tr-ablation} demonstrate that \ourmethod{} performs significantly better with our proposed ECI acquisition function, demonstrating the importance of ECI to the performance of \ourmethod{}. Additionally, \ourmethod{}-EIT still performed fairly well, outperforming all other baseline methods considered in the paper (e.g., all baselines in \cref{fig:main}). This highlights the importance of other aspects MOCOBO.

\subsection{Execution Time of Greedy Approximation Algorithm }
\label{sec:wallclock}
In \cref{tab:algo1-wall-time}, we provide an empirical evaluation of the average wall-clock runtime of \cref{alg:greedy-simple} with different values of $N$ (the number of data points), $K$ (the covering set size), and $T$ (the number of objectives). We provide average run times with $N = 20, 200, 2000, 20000, 200000,$ and $2000000$ for each combination of $K$ and $T$ that we ran \ourmethod{} on in \cref{sec:experiments}. To improve the speed of \cref{alg:greedy-simple}, in all experiments, we compute the inner for loop (lines 5-6 in \cref{alg:greedy-simple}) in parallel. This involves simply computing the marginal coverage of adding all $N$ data points at once in parallel, and then updating $A$ with the point that achieved maximum marginal converge improvement. Parallel computation of the marginal coverage improvement of the $N$ points is straightforward to implement using PyTorch. In \cref{tab:algo1-wall-time}, the ``Runtime" column gives the average runtime of \cref{alg:greedy-simple} with this parallel computation of the of the inner for loop, and the ``Runtime No Parallelization" column gives the average runtime without the parallel computation. Notice that as $N$ grows large, this parallel computation becomes essential to achieve the reasonably fast run times of \cref{alg:greedy-simple} needed to run \ourmethod{}. Average run times reported in \cref{tab:algo1-wall-time} are computed by running \cref{alg:greedy-simple} on a NVIDIA RTX A5000 GPU and averaging over $10$ runs. 
\begin{table}[!ht]
\caption{
Average wall-clock runtime of \cref{alg:greedy-simple} for different values of $N$ (the number of data points), $K$ (the covering set size), and $T$ (the number of objectives). 
The ``Runtime" column gives the average runtime of \cref{alg:greedy-simple} with parallel computation of the of the marginal coverage of adding the $N$ data points (parallel computation of lines 5-6). The ``Runtime No Parallelization" column gives the average runtime of \cref{alg:greedy-simple} without this parallelization of the inner for loop. 
Average run times are computed by running \cref{alg:greedy-simple} on a NVIDIA RTX A5000 GPU and averaging over $10$ runs. Standard errors over the $10$ runs are also provided. 
}
\label{tab:algo1-wall-time}
\centering
\resizebox{\columnwidth/2}{!}{
    \begin{tabular}{lcccc}
        \toprule
        N & K & T & Runtime No Parallelization (seconds) & Runtime (seconds) \\
        \midrule
        20  & 2 & 4 & $0.00605 \pm 0.00385$ & $0.00864 \pm 0.00795$ \\ 
        200  & 2 & 4 & $0.0230 \pm 0.00391$ & $0.00901 \pm 0.00828$ \\ 
        2000  & 2 & 4 & $0.194 \pm 0.00389$ & $0.00850 \pm 0.00781$ \\
        20000  & 2 & 4 & $2.0523 \pm 0.0259$ & $0.00852 \pm 0.00780$ \\
        200000  & 2 & 4 & $20.845 \pm 0.0446$ & $0.00872 \pm 0.00796$ \\
        2000000  & 2 & 4 & $195.740 \pm 0.114$ & $0.00923 \pm 0.00792$ \\
        \midrule
        20  & 2 & 8 & $0.00596 \pm 0.00375$ & $0.00906 \pm 0.00833$ \\ 
        200  & 2 & 8 & $0.0269 \pm 0.00446$ & $0.00894 \pm 0.00808$ \\
        2000  & 2 & 8 & $0.197 \pm 0.00386$ & $0.00865 \pm 0.00794$ \\
        20000  & 2 & 8 & $1.947 \pm 0.00348$ & $0.00864 \pm 0.00789$ \\
        200000  & 2 & 8 & $19.508 \pm 0.00964$ & $0.00915 \pm 0.00825$ \\
        2000000  & 2 & 8 & $193.317 \pm 0.218$ & $0.00962 \pm 0.00785$ \\
        \midrule
        20  & 4 & 12 & $0.00861 \pm 0.00430$ & $0.00873 \pm 0.00785$ \\
        200  & 4 & 12 & $0.0440 \pm 0.00371$ & $0.00875 \pm 0.00789$ \\
        2000  & 4 & 12 & $0.386 \pm 0.00642$ & $0.00910 \pm 0.00821$ \\
        20000  & 4 & 12 & $4.303 \pm 0.120$ & $0.00869 \pm 0.00776$ \\
        200000  & 4 & 12 & $38.596 \pm 0.0895$ & $0.00889 \pm 0.00784$ \\
        2000000  & 4 & 12 & $395.325 \pm 0.169$ & $0.0111 \pm 0.00772$ \\
        \midrule
        20  & 3 & 6 & $0.00704 \pm 0.00384$ & $0.00887 \pm 0.00809$ \\
        200  & 3 & 6 & $0.0334 \pm 0.00372$ & $0.00840 \pm 0.00764$ \\
        2000  & 3 & 6 & $0.314 \pm 0.00654$ & $0.00897 \pm 0.00817$ \\
        20000  & 3 & 6 & $2.942 \pm 0.00478$ & $0.00872 \pm 0.00785$ \\
        200000  & 3 & 6 & $28.732 \pm 0.0156$ & $0.00862 \pm 0.00775$ \\
        2000000  & 3 & 6 & $312.825 \pm 0.203$ & $0.00985 \pm 0.00799$ \\
        \midrule
        20  & 4 & 7 & $0.00765 \pm 0.00370$ & $0.00880 \pm 0.00793$ \\
        200  & 4 & 7 & $0.0448 \pm 0.00365$ & $0.00847 \pm 0.00760$ \\
        2000  & 4 & 7 & $0.412 \pm 0.00252$ & $0.00887 \pm 0.00798$ \\
        20000  & 4 & 7 & $3.917 \pm 0.00461$ & $0.00882 \pm 0.00789$ \\
        200000  & 4 & 7 & $38.932 \pm 0.0402$ & $0.00889 \pm 0.00791$ \\
        2000000  & 4 & 7 & $389.272 \pm 0.588$ & $0.0105 \pm 0.00809$ \\
        \midrule
        20  & 4 & 11 & $0.00772 \pm 0.00377$ & $0.00955 \pm 0.00853$ \\
        200  & 4 & 11 & $0.0422 \pm 0.00383$ & $0.00859 \pm 0.00775$ \\
        2000  & 4 & 11 & $0.387 \pm 0.00382$ & $0.00931 \pm 0.00837$ \\
        20000  & 4 & 11 & $4.155 \pm 0.00758$ & $0.00887 \pm 0.00792$ \\
        200000  & 4 & 11 & $41.869 \pm 0.0203$ & $0.00880 \pm 0.00777$ \\
        2000000  & 4 & 11 & $381.313 \pm 0.199$ & $0.0114 \pm 0.00803$ \\
        \bottomrule
    \end{tabular}
}
\end{table}

\paragraph{Efficiency of \cref{alg:greedy-simple}}
\cref{alg:greedy-simple} could also be made more efficient by pruning the points in $D_s$ with zero marginal coverage improvement after each outer-loop, since these points will continue to have zero marginal coverage improvement on subsequent loops. 
In future work, we plan to explore this and any other tricks that might allow us to further improve the efficiency of \cref{alg:greedy-simple}.

\newpage
\section{Batch Acquisition with Expected Coverage Improvement (ECI)}
\label{sec:approx-q-eci}
In batch acquisition, we select a batch of $q > 1$ candidates for evaluation.
In \cref{eq:full-q-eci}, we define q-ECI, a natural extension of the ECI acquisition function defined in \cref{eq:eci} to the batch acquisition setting. 
q-ECI gives the expected improvement in the coverage score after simultaneously observing the batch of $q$ points $\mathbf{X} = \left\{\bx_1, \ldots, \bx_q \right\}$.
When using batch acquisition, we aim to select a batch of $q$ points that maximize q-ECI. 

We will first discuss how one would estimate q-ECI using a Monte Carlo (MC) approximation. 
To select a batch of candidates $\hat{\mathbf{X}}$, we sample $m$ batches of $q$ points $B = \left\{ \mathbf{B}_{1}, \mathbf{B}_{2}, ..., \mathbf{B}_{m} \right\}$.
Here $\mathbf{B}_{j}$ is a batch of $q$ sampled points $\mathbf{B}_{j} = \left\{\bb_{j1}, \ldots, \bb_{jq} \right\}$. 
For each batch $\mathbf{B}_{j}$, we sample a realization $\hat{\mathbf{Y}}_{j} = \left\{\hat{\by}_{j1}, \ldots, \hat{\by}_{jq} \right\}$ from the GP surrogate model posterior. 
We leverage these samples to compute an MC approximation to the q-ECI of each $\mathbf{B}_{j}$: 
\begin{align}
q\text{-CI}(\mathbf{B}_{j}) &= \max(0, \max_{r=1, \dots, q} c(S^{*}_{D_s \cup \left\{(\bb_{jr}, \hat{\by}_{jr})\right\}}) - c(S^{*}_{D_s})).
\label{eq:q_coverage_improvement}
\end{align}
Here, $c(S^{*}_{D_s \cup \left\{(\bb_{jr}, \hat{\by}_{jr})\right\}})$ is the approximation of the coverage score of the new best covering set if we choose to evaluate candidate $\bb_{jr}$, assuming the candidate point will have the sampled objective values $\hat{\by}_{jr}$.
We would like to select and evaluate the batch of candidates $\mathbf{B}_{j}$ with the largest $q\text{-CI}$. 

Evaluating $q\text{-CI}$ for a single candidate batch requires $q$ evaluations of $c(S^{*}_{D_s \cup \left\{(\bb_{jr}, \hat{\by}_{jr})\right\}})$. 
Each evaluation of $c(S^{*}_{D_s \cup \left\{(\bb_{jr}, \hat{\by}_{jr})\right\}})$ requires a call to \cref{alg:greedy-simple} to first construct $S^{*}_{D_s \cup \left\{(\bb_{jr}, \hat{\by}_{jr})\right\}}$. 
Thus, batch acquisition with a full MC approximation of q-ECI requires $O(q \times m)$ calls of \cref{alg:greedy-simple}.
Assuming a sufficiently large $m$ to achieve a reliable MC approximation, this can become expensive for large batch sizes $q$. 
We therefore propose a faster approximation of batch ECI for practical use with large $q$. 

Instead of sampling $m$ batches of candidates, we sample $m$ individual data points $P = \left\{ \bp_{1}, \bp_{2}, ..., \bp_{m} \right\}$.
For each sampled point $\bp_{j}$, we sample a realization $\hat{\by}_{j} = (\hat{f}_1(\bp_{j}), \ldots, \hat{f}_T(\bp_{j}))$.
As in \cref{sec:eci}, we use the sampled realizations $\hat{\by}_{j}$ to compute an approximate coverage improvement $CI(\bp_{j})$ as defined in \cref{eq:coverage_improvement} for each point $\bp_{j}$.
To obtain a batch of $q$ candidates, we then greedily select the $q$ points $\bp_{j} \in P$ with the $q$ largest expected coverage improvements.
Note that this strategy does not involve sequential optimization of the $q$ points, as the batch of $q$ points is selected simultaneously as the points with the top-$q$ expected coverage improvements.

\newpage
\section{Limitations and Future Works}
\label{sec:limits}
\paragraph{Choosing $K$.} A primary limitation of \ourmethod{} is that the choice of the hyperparameter $K$ (the covering set size) may not always be straightforward. For example, as we mention in \cref{sec:experiments}, it may require domain knowledge to choose $K < T$ that is large enough that achieving good coverage is possible despite multiple conflicting objectives. 
In many practical applications, we prefer the smallest possible set size $K$ such that good coverage can still be achieved.
In future work, we plan to explore methods for simultaneously optimizing both the solutions in the covering set, and the size of the covering set, balancing the trade-off between minimizing the number of solutions needed and maximizing overall coverage.

\paragraph{Unsupervised generative model pre-training for structured domains.} We note that applying \ourmethod{} to structured domains requires a pre-trained generative model, such as a Variational Autoencoder (VAE), to embed the discrete input space into a continuous latent space where \ourmethod{} can be directly applied. Training such generative models typically demands significant computational resources and a large corpus of unlabeled data, which may not always be available in new domains. For the two structured tasks considered in this paper\textemdash{}molecule and peptide design\textemdash{}we leverage publicly available pre-trained VAEs from prior work. In contrast, \ourmethod{} does not require a generative model for continuous input spaces, where optimization is performed directly in the original domain. This distinction highlights a key practical limitation: deploying \ourmethod{} in new structured domains would necessitate pre-training a new generative model, which may be a barrier in resource-constrained settings.

\paragraph{Exploring lazy greedy evaluation approaches to improve computational efficiency.} Another avenue for future work is improving the computational efficiency of \cref{alg:greedy-simple} by using a lazy greedy evaluation strategy. Commonly employed in submodular maximization, lazy greedy approaches maintain a priority queue of marginal gains and only recompute them when necessary, avoiding redundant evaluations and reducing total computational cost. 

\paragraph{Threshold-based coverage.} 
The coverage optimization problem we consider in this paper provides a principled way to pose the goal of finding a small set of $K$ solutions such that each of the $T$ objectives is optimized \textit{as much as possible} by at least one solution in the set. This captures the setting we care about, for example, discovering $K$ antibiotics such that each pathogen is targeted as effectively as possible, not just adequately. 
In contrast, threshold-based coverage constitutes a fundamentally different problem: it assumes that we know, a priori, a threshold value for each objective beyond which we do not care to improve performance further. 
This setting is not aligned with the domains we focus on in this paper where better objective values are always desirable and thresholds are typically unknown or unhelpful. 
However, threshold-based coverage is a meaningful formulation for many other domains, such as drug toxicity screening, where desired minimum performance thresholds are known ahead of time. 
To address such domains, we plan to explore this alternative problem setting of threshold-based coverage optimization in future work.

\section{Broader Impact}
\label{sec:impacts}
This research includes applications in molecule and peptide design. While AI-driven biological design holds great promise for benefiting society, it is crucial to acknowledge its dual-use potential. Specifically, AI techniques designed for drug discovery could be misused to create harmful biological agents \cite{dualuse}.

Our goal is to accelerate drug development by identifying promising candidates, but it is imperative that experts maintain oversight, that all potential therapeutics undergo thorough testing and clinical trials, and that strict regulatory frameworks governing drug development and approval are followed.

\newpage
\section{Compute Resources}
\label{sec:compute}
In this section, we provide all details about the compute resources used to produce all results in this paper. 
\begin{table}[H]
  \centering
  \begin{threeparttable}
  \caption{Setup of internal cluster used to run experiments.}\label{table:internalcluster}
  \begin{tabular}{ll}
    \toprule
    \multicolumn{1}{c}{\textbf{Type}}
    & \multicolumn{1}{c}{\textbf{Specifications}}
    \\ \midrule
    System Topology & 20 nodes with 2 sockets each with 24 logical threads (total 48 threads) \\
    Processor       & 1 Intel Xeon Silver 4310, 2.1 GHz (maximum 3.3 GHz) per socket \\
    Cache           & 1.1 MiB L1, 30 MiB L2, and 36 MiB L3 \\
    Memory          & 250 GiB RAM \\
    Accelerator     & 1 NVIDIA RTX A5000 per node, 2 GHZ, 24GB RAM 
    \\ \bottomrule
  \end{tabular}
  \end{threeparttable}
\end{table}
\paragraph{Compute specifications (type and memory).}
We use GPU works to run all experiments and produce all empirical results provided in this paper.
A single GPU was used per run of each method compared on each task. 
Each each run uses approximately 12-18 GB of the GPU memory. 
Most experiments were executed on our internal cluster of NVIDIA RTX A5000 GPUs (see internal cluster compute details in \Cref{table:internalcluster}). 
We also used cloud compute resources for two weeks to complete additional replications of some experiments.  
We used a total of eight RTX 4090 GPU workers from \texttt{runpod.io}, each with approximately 24 GB of GPU memory. 

\paragraph{Execution time.}
For the relatively inexpensive rover task, each optimization run takes approximately 1 day of execution time.
For all other tasks considered, each optimization run takes approximately 3 days of execution time.
To create all coverage optimization plots, we ran all methods compared $20$ times each. 
Completing all of the runs needed to produce all of the results in this paper required roughly $64000$ total GPU hours. 

\paragraph{Compute resources for preliminary experiments.}
Preliminary experiments refer to the initial experiments for e.g. method development that are not included as results in the paper. 
All preliminary experiments were run on our internal cluster of NVIDIA RTX A5000 GPUs (see internal cluster compute details in \Cref{table:internalcluster}). 
We spent approximately $2000$ hours of GPU time on preliminary experiments.

\newpage
\section{Additional Implementation Details}
\label{sec:detials}
In this section, we provide additional implementation details for \ourmethod{}. We also refer readers to the \ourmethod{} codebase for the full-extent of implementation details and experimental setup needed to reproduce results provided \url{https://github.com/nataliemaus/mocobo}.

\subsection{Trust Region Hyperparameters}
\label{sec:tr-hypers}
For all trust region methods, the trust region hyperparameters are set to the \turbo{} defaults used by \citet{turbo}. 

\subsection{Surrogate Model}
\label{sec:ppgpr}
Since the tasks considered in this paper are challenging, high-dimensional tasks requiring a large number of function evaluations, we use approximate Gaussian process (GP) surrogate models. 
In particular, we use Parametric Gaussian Process Regressor (PPGPR)~\cite{PPGPR} surrogate models with a constant mean, standard RBF kernel, and $1024$ inducing points. Additionally, we use a deep kernel (several fully connected layers between the search space and the GP kernel) \citep{dkl}. We use two fully connected layers with $D$ nodes each, where $D$ is the dimensionality of the search space. 

We use the same PPGPR model(s) with the same configuration for \ourmethod{}, \turbo{}, \lolbo{}, and \robot{}. 
For \ourmethod{}, to model the $T$-dimensional output space, we use $T$ PPGPR models, one to approximate each objective $f_1, \ldots, f_T$. To allow information sharing between the models, we use a shared deep kernel (the $T$ PPGPR models share the same two-layer deep kernel) \citep{shared-dkl, dkt}. 

Unlike the other methods compared, \morbo{} was designed for use with an exact GP model rather than an approximate GP surrogate model. For fair comparison, we therefore run \morbo{} with an exact GP using all default hyperparameters and the official codebase provided by \citet{morbo}.

We train the PPGPR surrogate model(s) on data collected during optimization using the Adam optimizer~\cite{adam-optimizer} with a learning rate of $0.001$ and a mini-batch size of $256$. On each step of optimization, we update the model on collected data until we stop making progress (loss stops decreasing for $3$ consecutive epochs), or exceed $30$ epochs. 
Since we collect a large amount of data for each optimization run (e.g., as many as $2e6$ data points in a single run for the ``template constrained" peptide design task), we avoid updating the model on all data collected at once. On each step of optimization, we update the current surrogate model only on a subset of $1000$ of the collected data points.
This subset is constructed from the data that has obtained the highest objective values so far, along with the most recent batch of data collected. 
By always updating on the most recent batch of data collected, we ensure that the surrogate model is conditioned on every data point collected at some point during the optimization run.

\subsection{Initialization Data} 
\label{sec: init-data}
In this section, we provide details regarding the data used to initialize all optimization runs for all tasks in \cref{sec:experiments}. 

To initialize optimization for the molecule design task, we take a random subset of $10000$ molecules from the standardized unlabeled dataset of 1.27M molecules from the Guacamol benchmark software \cite{GuacaMol}.
We generate labels for these $10000$ molecules once, and then use the labeled data to initialize optimization for all methods compared. 

To initialize optimization for the peptide design tasks, we generate a a set of $20000$ peptide sequences by making random edits (insertions, deletions, and mutations) to the $10$ template peptide sequences in \cref{tab:templates}. We generate labels for these $20000$ peptides once, and then use the labeled data to initialize optimization for all methods compared. 

For all other tasks, we initialize optimization with $2000$ points sampled uniformly at random from the search space. 

\subsection{Diversity Constraints and Associated Hyperparameters for the \robot{} Baseline} 
\label{sec: robot-diversity-hypers}
For a single objective, \robot{} seeks a diverse set of $M$ solutions, requiring that the set of solutions have a minimum pairwise diversity $\tau$ according to the user specified diversity function $\divf{}$. 
Since \citet{robot} also consider rover and molecule design tasks, we use the same diversity function $\divf{}$ and diversity threshold $\tau$ used by \citet{robot} for these two tasks. For the peptide design tasks, we define $\divf{}$ to be the edit distance between peptide sequences, and use a diversity threshold of $\tau=3$ edits. For the image optimization task, since there is no obvious semantically meaningful measure of diversity between two sets of input parameters, we define $\divf{}$ to be the Euclidean distance between solutions, and use $\tau = 1.45$, the approximate average Euclidean distance between a randomly selected pair of points in the search space.

\newpage
\section{Additional Task Details}
\label{sec:task-detials}
In this section we provide additional details for the chosen set of tasks we provide results for in \cref{sec:experiments}. 

\subsection{HDR Image Tone Mapping}
\label{sec:image-task-more-detials}
\begin{table}[!ht]
\centering
\caption{\texttt{pyiqa} metric ID strings used to identify the $T=7$ target image quality metrics used for image tone mapping. Each metric is a no-reference image aesthetic (IAA) or quality (IQA) assessment metric obtained from the \texttt{pyiqa} library~\citep{chaofeng2022iqapytorch}.}
\label{tab:img-metrics}
    \begin{tabular}{rll}
        \toprule
        \multicolumn{1}{c}{Objective ID} & \multicolumn{1}{c}{Pyiqa Metric ID} & \multicolumn{1}{c}{Reference} \\
        \midrule
        1  & \texttt{nima} & \citealp{talebi2018nima} \\
        2 & \texttt{nima-vgg16-ava} & \citealp{talebi2018nima,murray2012ava} \\
        3 & \texttt{topiq-iaa-res50} & \citealp{chen2024topiq} \\
        4 & \texttt{laion-aes} & \citealp{schuhmann2022laion} \\
        5 & \texttt{hyperiqa} & \citealp{su2020blindly} \\
        6 & \texttt{tres} & \citealp{golestaneh2022noreference} \\
        7 & \texttt{liqe} & \citealp{zhang2023cvpr} \\
        \bottomrule
    \end{tabular}
\end{table}
In \cref{tab:img-metrics}, we list the names of the $7$ image quality metrics used for the image tone mapping tasks described in \cref{sec: tasks}.

\vspace{-1ex}
\paragraph{Target metrics.}
The target image quality (Image Quality Assessment, IQA) and aesthetic (Image Aesthetic Assessment, IAA) metrics are organized in \cref{tab:img-metrics}, where all except \texttt{nima} are IAA metrics, while \texttt{nima} is an IQA metric.
For more detailed information about each metric and their corresponding datasets, please refer to their original references.

\vspace{-1ex}
\paragraph{Benchmark images.}
We use two benchmark images. The first is the ``Stanford Memorial Church" image obtained from \url{https://www.pauldebevec.com/Research/HDR/} by courtesy of Paul E. Debevec~\citep{1997recovering}. 
The second is the ``desk lamp" image obtained from \url{https://cadik.posvete.cz/tmo/} by courtesy of Martin \v{C}ad\'{i}k~\citep{cadik2008phd}.
Because commonly used metrics are only correlated with subjective image quality, prior work on tuning parameters for these and related benchmarks has been done by trial and error and human-in-the-loop type schemes~\citep{lischinski2006interactive}, including preferential BO-based approaches~\citep{koyama2017sequential,koyama2020sequential}.

\vspace{-1ex}
\paragraph{Imaging pipeline.}
We consider a tone mapping pipeline consisting of a multi-layer detail decomposition~\citep{tumblin1999lcis,durand2002fast,li2005compressing} using the guided filter by~\citet{he2013guided} (3 detail layers and 1 base layer), followed by gamma correction~\citep[Section 2.9]{reinhard2005high}, resulting in a 13-dimensional optimization problem. 
The complete image processing pipeline is very similar to the classic approach proposed by \citet{tumblin1999lcis}, where the main difference is that, similarly to \citet{farbman2008edge}, we replace the diffusion smoothing filter with a more recent edge-preserving detail smoothing filter, the guided filter, by~\citet{he2013guided}.
First, given an HDR image in the RGB color space \(I = (I_{\mathrm{r}}, I_{\mathrm{r}}, I_{\mathrm{b}}))\), where \(I_{\mathrm{r}}\) is the red channel, \(I_{\mathrm{r}}\) is the blue channel, and \(I_{\mathrm{g}}\) is the green channel, we compute the luminance according to 
\[
    L \triangleq 0.2989 I_r +  0.587 I_g + 0.114 I_b \, .
\]
(The constants were taken from the code of \citealt{farbman2008edge}.)
The luminance image \(L\) is then logarithmically compressed and then decomposed into three detail layers and one base layer by applying the guided filter three times, each with a different radius parameter \(r_i\) and smoothing parameter \(\epsilon_i\) for \(i = 1, \ldots, 3\).
Then, we amplify or attenuate each channel with a corresponding gain coefficient \(g_{\mathrm{detail}, i}\) \(i = 1, \ldots, 3\) for the detail and \(g_{\mathrm{base}}\) for the base layers.
The image is then reconstructed by adding all the layers, including the colors.
Following~\citet{tumblin1999lcis}, we also apply a gain, \(g_{\mathrm{color}}\), to the color channels.
Finally, the resulting image is applied an overall gain \(g_{\mathrm{out}}\) and then gamma-corrected~\citep[Section 2.9]{reinhard2005high} with an exponent of \(1/\gamma\).
The parameters for this pipeline are organized in \cref{tab:img-parameters}.
The implementation uses OpenCV~\citep{Bradski2000opencv}, in particular, the guided filter implementation in the extended image processing (\texttt{ximgproc}) submodule.
\begin{table}[!ht]
\centering
\caption{\texttt{pyiqa} metric ID strings used to identify the $T=7$ target image quality metrics used for the image tone mapping task.}
\label{tab:img-parameters}
\begin{tabular}{rll}
    \toprule
    \multicolumn{1}{c}{Parameter} & \multicolumn{1}{c}{Description} & \multicolumn{1}{c}{Domain} \\
    \midrule
    \(r_i\) & radius of the guided filter for generating the \(i\)th detailed layer & \(\{3, \ldots, 32\}\)  \\
    \(\epsilon_i\)  & \(\epsilon\) of the guided filter for generating the \(i\)th detailed layer & \([0.01, 10]\) \\
    \(g_{\mathrm{detail}, i}\)  &  Gain of the \(i\)th detail layer & \([0, 1.5]\) \\
    \(g_{\mathrm{base}}\)  &  Gain of the \(i\)th detail layer & \([0, 1]\) \\
    \(g_{\mathrm{color}}\)  &  Gain of the color layer &  \([0.5, 1.5]\) \\
    \(g_{\mathrm{out}}\)  &  Gain of tone-mapped output & \([0.2, 2.0]\)  \\
    \(\gamma\)  &  Gamma correction inverse exponent & \([1, 5]\)  \\
    \bottomrule
\end{tabular}
\end{table}
\vspace{-1ex}
\paragraph{Optimization problem setup.}
For optimization, we map the parameters 
\[ 
    \mathbf{x} = \left( r_1, \, r_2, \, r_3, \, \epsilon_1, \, \epsilon_2, \, \epsilon_3, \, g_{\mathrm{detail},1}, \, g_{\mathrm{detail},2}, \, g_{\mathrm{detail},3}, \, g_{\mathrm{base}}, \, g_{\mathrm{color}}, \, g_{\mathrm{out}}, \, \gamma \right) 
\]
to the unit hypercube \({[0, 1]}^{13}\).
In particular, the mapped values on the unit interval \([0, 1]\) linearly interpolate the domain of each parameter shown in \cref{tab:img-parameters}.
For the radius parameters \(r_1, r_2, r_3\), which are categorical, we naively quantize the domain by rounding the output of the interpolation to the nearest integer.

\subsection{Peptide Design}
\label{sec:peptides-task-more-detials}
\begin{table}[!ht]
\centering
\caption{Names of the $T=11$ target bacteria used for the peptide design task. The first seven bacteria are Gram negative (IDs B1-B7) and the last four (IDs B8-B11) are Gram positive.}
    \label{tab:bacteria}
\resizebox{\columnwidth/2}{!}{
    \begin{tabular}{lc}
        \toprule
        Objective ID & Target Pathogenic Bacteria \\
        \midrule
        B1  & \texttt{A. baumannii ATCC 19606} \\
        B2 & \texttt{E. coli ATCC 11775} \\
        B3 & \texttt{E. coli AIC221} \\
        B4 & \texttt{E. coli AIC222-CRE} \\
        B5 & \texttt{K. pneumoniae ATCC 13883} \\
        B6 & \texttt{P. aeruginosa PAO1} \\
        B7 & \texttt{P. aeruginosa PA14} \\
        B8 & \texttt{S. aureus ATCC 12600} \\
        B9 & \texttt{S. aureus ATCC BAA-1556-MRSA} \\
        B10 & \texttt{E. faecalis ATCC 700802-VRE} \\
        B11 & \texttt{E. faecium ATCC 700221-VRE} \\
        \bottomrule
    \end{tabular}
}
\end{table}
\begin{table}[!ht]
\centering
\caption{Template amino acid sequences used for the ``template constrained" peptide design task.}
\label{tab:templates}
\resizebox{\columnwidth/3}{!}{
    \begin{tabular}{c} 
        \toprule
        Template Amino Acid Sequences \\
        \midrule
        \texttt{RACLHARSIARLHKRWRPVHQGLGLK} \\
        \texttt{KTLKIIRLLF} \\
        \texttt{KRKRGLKLATALSLNNKF} \\
        \texttt{KIYKKLSTPPFTLNIRTLPKVKFPK} \\
        \texttt{RMARNLVRYVQGLKKKKVI} \\
        \texttt{RNLVRYVQGLKKKKVIVIPVGIGPHANIK} \\
        \texttt{CVLLFSQLPAVKARGTKHRIKWNRK} \\
        \texttt{GHLLIHLIGKATLAL} \\
        \texttt{RQKNHGIHFRVLAKALR} \\
        \texttt{HWITINTIKLSISLKI} \\
        \bottomrule
    \end{tabular}
}
\end{table}
\cref{tab:bacteria} specifies the $T=11$ target bacteria used for the peptide design task from \cref{sec:experiments}. The first seven bacteria are Gram negative bacteria (Objective IDs B1-B7) and the last four (Objective IDs B8-B11) are Gram positive.
\cref{tab:templates} gives the $10$ template amino acid sequences used for the ``template constrained" variation of the peptide design task from \cref{sec:experiments}.

\newpage
\section{Proof that Finding the Best Observed Covering Set is NP-hard}
\label{sec: nphard}
In this section, we prove \cref{lemma:np_hardness}, that finding $S^{*}_{D_s}$ is NP-Hard. 

\begin{proof}
We prove \cref{lemma:np_hardness} by reduction from the well-known \emph{Maximum Coverage Problem (MCP)}, which is NP-Hard.

\begin{definition}[Maximum Coverage Problem (MCP)]
\label{def:mcp}
In the Maximum Coverage Problem, we are given:
\begin{itemize}
    \item A universe \( U = \{e_1, e_2, \ldots, e_m\} \) of \( m \) elements.
    \item A collection of \( n \) subsets \( \mathcal{S} = \{A_1, A_2, \ldots, A_n\} \), where \( A_i \subseteq U \).
    \item An integer \( K \), the number of subsets we can select.
\end{itemize}
The objective is to find a collection of \( K \) subsets \( \mathcal{S}' \subseteq \mathcal{S} \) such that the total number of elements covered, \( \bigcup_{A \in \mathcal{S}'} A \), is maximized.
\end{definition}

\begin{proposition}[MCP is NP-Hard]
\label{prop:mcp_hard}
The Maximum Coverage Problem is NP-Hard.
\end{proposition}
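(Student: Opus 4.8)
The plan is to establish NP-hardness of the Maximum Coverage Problem (MCP) by a polynomial-time reduction from the classical \emph{Set Cover} decision problem; equivalently one could reduce from \emph{Vertex Cover}. The key observation is that MCP is, in a precise sense, the optimization counterpart of Set Cover: deciding whether $K$ subsets can cover the \emph{entire} universe is exactly the Set Cover decision question.

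First I would recall the decision version of Set Cover: given a universe $U$, a collection $\mathcal{S} = \{A_1, \ldots, A_n\}$ of subsets of $U$, and an integer $K$, decide whether there exists a sub-collection $\mathcal{S}' \subseteq \mathcal{S}$ with $|\mathcal{S}'| \le K$ and $\bigcup_{A \in \mathcal{S}'} A = U$. This problem is NP-complete. Given such an instance, I would construct an MCP instance (in the sense of \cref{def:mcp}) using the \emph{same} $U$, $\mathcal{S}$, and $K$, and then ask for the maximum number of elements coverable by a choice of $K$ subsets. This mapping copies the input verbatim and is therefore computable in linear time.

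Next I would verify correctness. Since no choice of $K$ subsets can cover more than $|U|$ elements, the optimal MCP value equals $|U|$ if and only if some collection of $K$ subsets covers all of $U$, i.e., if and only if the Set Cover instance is a yes-instance. When the original cover uses fewer than $K$ sets, I pad it with arbitrary additional sets; this is harmless because coverage is monotone under adding sets, and one may assume $n \ge K$ without loss of generality (otherwise the Set Cover instance is trivially decided by testing whether all $n$ sets cover $U$). Consequently, any algorithm that returns the optimal MCP coverage value decides Set Cover by a single comparison against $|U|$, so MCP is NP-hard.

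The main obstacle is not conceptual but organizational: reconciling the ``exactly $K$'' formulation of \cref{def:mcp} with the ``at most $K$'' formulation of Set Cover (dispatched by monotonicity of coverage together with padding), and bridging the decision-versus-optimization gap so that hardness of the optimization problem follows cleanly from NP-completeness of the decision problem. Once these technical points are settled, the reduction itself is immediate, since MCP and Set Cover operate on identical combinatorial data and differ only in whether full coverage is demanded or merely maximized.
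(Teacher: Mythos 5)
Your proof is correct, but note that the paper itself offers no proof of \cref{prop:mcp_hard}: it is asserted as a well-known fact, and the paper's actual work (the reduction from MCP to the covering-set problem) is built on top of that assertion. Your reduction from Set Cover is the standard argument underwriting the assertion, and you handle the two genuine technical points properly: the mismatch between the ``at most $K$'' formulation of Set Cover and the ``exactly $K$'' formulation of \cref{def:mcp} is dispatched by monotonicity of coverage plus padding (with the degenerate case $n < K$ decided trivially), and the decision-versus-optimization gap is bridged by observing that an algorithm returning the optimal MCP value decides Set Cover by a single comparison against $|U|$. So your proposal supplies a correct, complete proof of a statement the paper leaves as a citation; there is nothing to reconcile against a paper-internal argument, since none is given.
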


\paragraph{Reduction from MCP to Finding \( S^{*}_{D_s} \):}
We reduce an instance of MCP to the problem of finding \( S^{*}_{D_s} \) as follows:
\begin{enumerate}
    \item Let the universe \( U = \{e_1, e_2, \ldots, e_m\} \) correspond to the objectives \( \{1, 2, \ldots, T\} \) in the optimal covering set problem, i.e., set \( T = m \).
    \item Let each subset \( A_i \in \mathcal{S} \) correspond to a point \( \bx_i \in D_s \).
    \item Define each objective \( f_t : \inputdom \rightarrow \{0,1\} \), and set \( f_t(\bx_i) = 1 \) if subset \( A_i \) contains element \( e_t \), and \( f_t(\bx_i) = 0 \) otherwise. Intuitively, this means that each point \( \bx_i \) "covers" objective \( f_t \) if it achieves value 1 under that objective. While the design points $x_i$ are not literal subsets, they induce coverage behavior that mirrors the structure of MCP through their binary function values across the objectives.
    \item Under this mapping, coverage score $c(S) = \sum_{t=1}^T \max_{\bx \in S} f_t(\bx)$ is the total number of objectives “covered” (i.e., the number of objectives for which at least one of the selected points has value 1). It follows that the goal of the Maximum Coverage Problem (selecting $K$ ``subsets" to maximize coverage) corresponds exactly to selecting $K$ points \( \bx_i \in D_s \) to maximize the coverage score $c(S)$. 
\end{enumerate}

\paragraph{Correctness of the Reduction:}
The reduction ensures that:
\begin{itemize}
    \item Each subset \( A_i \in \mathcal{S} \) is encoded by a design point \( \bx_i \in D_s \) via its binary-valued outputs over the objectives.

    \item Each element \( e_t \in U \) is mapped to objective \( f_t \), and is considered “covered” if some selected point \( \bx \in S \) satisfies \( f_t(\bx) = 1 \). In particular, if subset \( A_i \) covers/contains element \( e_t \), then \( f_t(\bx_i) = 1 \); otherwise, \( f_t(\bx_i) = 0 \).
    
    \item The MCP objective (maximize number of covered elements) is equivalent to maximizing the coverage score \( c(S) \), which counts how many objectives are covered by the selected set \( S \).
\end{itemize}
Thus, solving the optimal covering set problem is equivalent to solving MCP.

\paragraph{Implications:}
Since MCP is NP-Hard (Proposition~\ref{prop:mcp_hard}), and we have reduced MCP to the problem of finding \( S^{*}_{D_s} \) in polynomial time, it follows that finding \( S^{*}_{D_s} \) is also NP-Hard.

\end{proof}

\paragraph{Proof significance}
We do not claim the above proof of \cref{lemma:np_hardness} as a novel contribution of this work, as it follows straightforwardly from the fact that the well-known Maximum Coverage Problem is NP-hard. 
Note that proving \cref{lemma:np_hardness} would also follow straightforwardly from the work of \citet{icml25-r1-1} who demonstrate that the coverage optimization problem generalizes k-means clustering. 
Rather than providing additional novel contribution, the above proof of \cref{lemma:np_hardness} serves to justify our use of a greedy approximation algorithm (\cref{alg:greedy-simple}) to approximate $S^{*}_{D_s}$ on each iteration of \ourmethod{}.

\newpage
\section{Approximation Proof for Greedy Algorithm} 
\label{sec: greedyproof}
In this section, we prove \cref{theorem:greedy-simple-approx}, that \cref{alg:greedy-simple} is a $(1 - \frac{1}{e})$-Approximation.

\begin{definition}[Coverage Score]
\label{def:coverage_score}
The coverage score of a set \( S \subseteq D_s \), denoted \( c(S) \), is defined as:
\[
c(S) = \sum_{t=1}^T \max_{\bx \in S} f_t(\bx),
\]
where \( f_t(\bx) \) is the observed value of objective \( t \) at point \( \bx \).
\end{definition}

\begin{definition}[Optimal Covering Set]
\label{def:optimal_covering_set}
Let \( S^{*}_{D_s} \subseteq D_s \) denote the optimal covering set of size \( K \):
\[
S^{*}_{D_s} = \argmax_{S \subseteq D_s, |S| = K} c(S).
\]
Its coverage score is given by \( c(S^{*}_{D_s}) \).
\end{definition}

\begin{definition}[Contribution of Objective Function $f_t$ to Coverage Score]
\label{def:contribution}
Given a covering set \( S \subseteq D_s \), we denote the contribution of objective $f_t \in \{f_1, f_2, \ldots, f_T \}$ to the overall coverage score $c(S)$ as $g_t(S)$ where 
\[
g_t(S) = \max_{\bx \in S} f_{t}(\bx).
\]
It follows that from \cref{def:contribution} that:
\[
c(S) = \sum_{t=1}^T g_t(S).
\] 
\end{definition}

\begin{lemma}[Monotonicity]
\label{lem:monotonicity}
The coverage score $c(S)$ as defined in \cref{def:coverage_score} is monotone, i.e., for any $S \subseteq S' \subseteq D_s$, we have:
\[
c(S) \leq c(S').
\]
\end{lemma}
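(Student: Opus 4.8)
The plan is to reduce the monotonicity of the aggregate coverage score $c$ to the monotonicity of each per-objective contribution $g_t$ from \cref{def:contribution}, and then recombine by summation. Concretely, I would fix an arbitrary objective index $t \in \{1, \ldots, T\}$ and argue that $g_t(S) \le g_t(S')$ whenever $S \subseteq S'$, after which the full claim follows by adding up the $T$ per-objective inequalities.

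For the per-objective step, the key observation is that $g_t(S) = \max_{\bx \in S} f_t(\bx)$ is a maximum taken over the set $S$, and enlarging the set over which a maximum is taken can never decrease the optimal value. I would make this precise by letting $\bx_S^{\star} \in \argmax_{\bx \in S} f_t(\bx)$ attain the maximum for $S$; since $\bx_S^{\star} \in S \subseteq S'$, this same point is feasible in the maximization defining $g_t(S')$, so $g_t(S') \ge f_t(\bx_S^{\star}) = g_t(S)$. This is the entire substantive content of the lemma, and it is essentially immediate.

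With $g_t(S) \le g_t(S')$ established for every $t$, I would sum the inequalities and invoke the decomposition $c(S) = \sum_{t=1}^{T} g_t(S)$ recorded in \cref{def:contribution}, giving $c(S) = \sum_{t=1}^{T} g_t(S) \le \sum_{t=1}^{T} g_t(S') = c(S')$, which is exactly the stated conclusion.

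The only genuine subtlety — and the main thing I would be careful about — is the boundary case $S = \emptyset$, since $\max_{\bx \in \emptyset} f_t(\bx)$ is otherwise undefined, yet \cref{alg:greedy-simple} initializes the greedy construction at the empty set with $c(\emptyset) = 0$. I would therefore adopt the convention $g_t(\emptyset) = 0$ consistent with the algorithm, which forces the mild assumption that the observed objective values are non-negative, $f_t(\bx) \ge 0$ (this holds in the reduction from Maximum Coverage where $f_t \in \{0,1\}$, and may be assumed in general by shifting each objective by a constant, which does not affect the argmaximizing set). Under this convention one checks that $g_t(\emptyset) = 0 \le \max_{\bx \in S'} f_t(\bx) = g_t(S')$ for every nonempty $S'$, so the per-objective inequality, and hence the summation argument, goes through without modification.
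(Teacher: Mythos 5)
Your proof is correct and takes essentially the same route as the paper's: both reduce to the per-objective inequality $g_t(S) \leq g_t(S')$ and sum over $t$, with the paper justifying that inequality via the decomposition $g_t(S') = \max\bigl(g_t(S), g_t(S' \setminus S)\bigr)$ while you use the equivalent witness-feasibility argument. Your extra care about the $S = \emptyset$ convention is a minor refinement the paper omits, but it does not change the substance of the argument.
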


\begin{proof}
Adding more points to a set can only increase or maintain the maximum values of $f_t$ for each objective $t$, since for each objective:
\begin{equation*}
    g_t(S') = \max\left(g_t(S), g_t(S' \setminus S)\right) \geq g_t(S).
\end{equation*}
Hence, $c(S)$ is monotone.
\end{proof}

\begin{lemma}[Submodularity]
\label{lem:submodularity}
The coverage score $c(S)$ as defined in \cref{def:coverage_score} is submodular, i.e., for any $S \subseteq S' \subseteq D_s$ and any $\bx^* \in D_s \setminus S'$, we have:
\[
c(S \cup \{\bx^*\}) - c(S) \geq c(S' \cup \{\bx^*\}) - c(S').
\]
\end{lemma}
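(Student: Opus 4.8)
The plan is to exploit the additive decomposition $c(S) = \sum_{t=1}^{T} g_t(S)$ from \cref{def:contribution}, together with the elementary fact that a nonnegative sum of submodular functions is itself submodular. It therefore suffices to establish submodularity of each single-objective contribution $g_t(S) = \max_{\bx \in S} f_t(\bx)$ and then add the $T$ resulting inequalities to recover the claim for $c$.

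First I would compute the marginal gain of adding the new point $\bx^*$ to an arbitrary set $A \subseteq D_s$ under a single objective $f_t$. Since $g_t(A \cup \{\bx^*\}) = \max\bigl(g_t(A), f_t(\bx^*)\bigr)$, the marginal gain takes the closed form
\[
g_t(A \cup \{\bx^*\}) - g_t(A) = \max\bigl(0, \, f_t(\bx^*) - g_t(A)\bigr).
\]
The key observation is that, viewed as a function of the scalar quantity $g_t(A)$, this expression is non-increasing: the map $a \mapsto \max(0, f_t(\bx^*) - a)$ decreases weakly as $a$ grows.

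Next I would invoke \cref{lem:monotonicity} at the level of a single objective: because $S \subseteq S'$, the points in $S' \setminus S$ can only raise the running maximum, so $g_t(S) \leq g_t(S')$. Feeding these two ordered values into the non-increasing marginal-gain map above yields
\[
\max\bigl(0, f_t(\bx^*) - g_t(S)\bigr) \;\geq\; \max\bigl(0, f_t(\bx^*) - g_t(S')\bigr),
\]
which is exactly the per-objective submodularity inequality $g_t(S \cup \{\bx^*\}) - g_t(S) \geq g_t(S' \cup \{\bx^*\}) - g_t(S')$. Summing over $t = 1, \ldots, T$ and regrouping via $c = \sum_t g_t$ then delivers the stated inequality for the coverage score.

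I do not expect a genuine obstacle, as this is the standard ``coverage functions are submodular'' fact specialized to the $\max$-of-values structure. The only point requiring care is correctly identifying the marginal gain as a positive part and confirming its weak monotonic dependence on the current maximum $g_t(A)$; this is precisely what makes the two marginal gains comparable once the ordering $g_t(S) \leq g_t(S')$ is established. A minor bookkeeping check is that summation preserves the inequality because every per-objective summand points in the same direction.
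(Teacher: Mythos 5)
Your proposal is correct and follows essentially the same route as the paper's proof: both reduce to the per-objective marginal gain $\max\bigl(0, f_t(\bx^*) - g_t(\cdot)\bigr)$ (which the paper writes via a case split), use $g_t(S) \leq g_t(S')$ to compare the two gains term by term, and sum over $t$. The only difference is presentational---you package the argument as ``each $g_t$ is submodular, and sums of submodular functions are submodular,'' while the paper carries the sum through the whole computation---but the underlying inequality chain is identical.
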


\begin{proof}
The marginal improvement of adding $\bx^*$ to $S$ is:
\begin{align*}
    c(S \cup \{x^*\}) - c(S) &= \sum_{t=1}^{T} \left( g_t(S \cup \{x^*\}) - g_t(S) \right) \\
    &= \sum_{t=1}^{T} \max\left[f_t(x^*), g_t(S) \right] - g_t(S) \\
    &= \sum_{t=1}^{T} \begin{cases} f_t(\bx^*) - g_t(S) & f_t(\bx^*) > g_t(S) \\ 0 & \textrm{otherwise} \end{cases}.
\end{align*}
Likewise, for $S'$ this is:
\begin{equation*}
    c(S' \cup \{x^*\}) - c(S') = \sum_{t=1}^{T} \begin{cases} f_t(\bx^*) - g_t(S') & f_t(\bx^*) > g_t(S') \\ 0 & \textrm{otherwise} \end{cases}.
\end{equation*}
Now, noting that for each term
\begin{equation*}
    g_t(S') = \max\left[g_t(S), g_t( S' \setminus S) \right] \geq g_t(S),
\end{equation*}
we know then that $f_t(\bx^*) - g_t(S') \leq f_t(\bx^*) - g_t(S)$, and further that $f_t(\bx^*) > g_t(S')$ implies that $f_t(\bx^*) > g_t(S)$. Taken together, these imply that:
\begin{equation*}
    \sum_{t=1}^{T} \begin{cases} f_t(\bx^*) - g_t(S') & f_t(\bx^*) > g_t(S') \\ 0 & \textrm{otherwise} \end{cases} \leq \sum_{t=1}^{T} \begin{cases} f_t(\bx^*) - g_t(S) & f_t(\bx^*) > g_t(S) \\ 0 & \textrm{otherwise} \end{cases}
\end{equation*}
and therefore:
\begin{equation*}
    c(S \cup \{\bx^*\}) - c(S) \geq c(S' \cup \{\bx^*\}) - c(S').
\end{equation*}
\end{proof}

\begin{lemma}[Greedy Achieves a $(1 - \frac{1}{e})$-Approximation for Monotone Submodular Functions]
\label{lem:any-monotone-submodular-func}
For any monotone submodular function, the greedy submodular optimization strategy provides a $(1 - \frac{1}{e})$-approximation. 
\end{lemma}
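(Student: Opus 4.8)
The plan is to reproduce the classical argument of Nemhauser, Wolsey, and Fisher for maximizing a monotone submodular function under a cardinality constraint. Let $c$ be the monotone submodular function, let $S^{*}$ denote an optimal set of size $K$ with $\mathrm{OPT} = c(S^{*})$, and let $A_0 = \emptyset \subseteq A_1 \subseteq \cdots \subseteq A_K$ be the nested sequence of sets produced by the greedy strategy, where $A_{i+1} = A_i \cup \{\bx_{i+1}\}$ and $\bx_{i+1}$ is the element of maximal marginal gain at step $i$. The goal is to show $c(A_K) \geq (1 - \tfrac{1}{e})\,\mathrm{OPT}$.

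First I would establish the key per-step inequality
\[
\mathrm{OPT} - c(A_i) \;\leq\; K\bigl(c(A_{i+1}) - c(A_i)\bigr).
\]
To derive this, I would start from monotonicity of $c$, which gives $\mathrm{OPT} = c(S^{*}) \leq c(S^{*} \cup A_i)$. Writing the telescoping sum that adds the (at most $K$) elements of $S^{*} \setminus A_i$ to $A_i$ one at a time, submodularity bounds each intermediate marginal gain by the marginal gain of adding that same element directly to $A_i$, yielding
\[
c(S^{*} \cup A_i) - c(A_i) \;\leq\; \sum_{\bx \in S^{*}} \bigl(c(A_i \cup \{\bx\}) - c(A_i)\bigr).
\]
Since the greedy rule selects the single element of largest marginal gain, every summand is at most $c(A_{i+1}) - c(A_i)$, and there are at most $K$ of them; chaining these bounds gives the per-step inequality.

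Next I would convert this into a recursion on the optimality gap $\delta_i := \mathrm{OPT} - c(A_i)$. The inequality reads $\delta_i \leq K(\delta_i - \delta_{i+1})$, that is $\delta_{i+1} \leq (1 - \tfrac{1}{K})\,\delta_i$. Iterating from $\delta_0 = \mathrm{OPT}$ gives $\delta_K \leq (1 - \tfrac{1}{K})^K\,\mathrm{OPT}$, and the elementary bound $(1 - \tfrac{1}{K})^K \leq e^{-1}$ yields $\delta_K \leq \mathrm{OPT}/e$. Rearranging $c(A_K) = \mathrm{OPT} - \delta_K$ produces the claimed $c(A_K) \geq (1 - \tfrac{1}{e})\,\mathrm{OPT}$.

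The main obstacle is the per-step inequality: it is the only place where both structural hypotheses are used simultaneously, and care is needed to justify the element-by-element decomposition of $c(S^{*} \cup A_i) - c(A_i)$ and to apply submodularity to each intermediate prefix correctly. The subsequent recursion and the exponential bound are routine.
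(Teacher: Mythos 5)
Your proof is correct, but it is genuinely more than the paper offers: the paper does not prove this lemma at all---its entire ``proof'' is a citation to the classical result of \citet{greedysubmodular}, and what you have written is exactly that classical Nemhauser--Wolsey--Fisher argument (the per-step inequality $\mathrm{OPT} - c(A_i) \leq K\bigl(c(A_{i+1}) - c(A_i)\bigr)$ via monotonicity plus a telescoping/submodularity bound, the recursion $\delta_{i+1} \leq (1 - \tfrac{1}{K})\delta_i$, and $(1-\tfrac{1}{K})^K \leq e^{-1}$). Two points deserve explicit care to make your write-up airtight. First, when you replace the telescoping sum over $S^{*} \setminus A_i$ by a sum over all of $S^{*}$, you should note this is valid because each marginal gain $c(A_i \cup \{\bx\}) - c(A_i)$ is nonnegative by monotonicity, so the extra terms only help. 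Second, your base case $\delta_0 = \mathrm{OPT}$ silently assumes the normalization $c(\emptyset) = 0$; the classical theorem requires this (without it one only gets $c(A_K) \geq (1-\tfrac{1}{e})\,\mathrm{OPT} + \tfrac{1}{e}\,c(\emptyset)$, and for functions with negative optimum a multiplicative guarantee is impossible), so the lemma as stated for ``any monotone submodular function'' implicitly carries this hypothesis. In the paper's application the normalization holds by fiat, since \cref{alg:greedy-simple} initializes the score of the empty set to zero. As for what each approach buys: the paper's citation is the standard, economical choice, while your self-contained derivation documents exactly where monotonicity, submodularity, and normalization enter---which dovetails nicely with the paper's \cref{lem:monotonicity,lem:submodularity}, whose whole purpose is to verify those two structural properties for the coverage score.
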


\begin{proof}
This is a well-known result about monotone submodular functions shown, for example, by \citet{greedysubmodular}. 
\end{proof}

\begin{proof}
\textbf{Proof of \cref{theorem:greedy-simple-approx}}
From \cref{lem:monotonicity} and \cref{lem:monotonicity}, we know that $c(S)$ is monotone submodular. 

Since $c(S)$ is monotone submodular, and \cref{alg:greedy-simple} approximates $A^{*}_{D_s}$ using greedy submodular optimization, it follows from \cref{lem:any-monotone-submodular-func} that \cref{alg:greedy-simple} achieves:
\[
c(A^{*}_{D_s}) \geq \left(1 - \frac{1}{e}\right)c(S^{*}_{D_s}).
\]
\end{proof}


\end{document}